\documentclass[11pt]{article}
%


\usepackage[margin=1in]{geometry}
\usepackage[utf8]{inputenc} 
\usepackage[T1]{fontenc}    
\usepackage{hyperref}       
\usepackage{url}            
\usepackage{booktabs}       
\usepackage{amsfonts}       
\usepackage{nicefrac}       
\usepackage{microtype}      
\usepackage{amsmath,amssymb,algorithm,algorithmicx,algpseudocode,amsthm,bm}
\usepackage{adjustbox}
\usepackage{graphicx}
\usepackage{latexsym}
\usepackage{listings}
\usepackage{xcolor}
\usepackage{placeins}
\usepackage{multirow}
\usepackage{tabu}
\usepackage{authblk}
\usepackage[resetlabels]{multibib}

\newcites{ltex}{References}

\title{Efficient Private ERM for Smooth Objectives}

\newtheorem{assumption}{Assumption}
\newtheorem{theorem}{Theorem}
\newtheorem{definition}{Definition}
\newtheorem{lemma}{Lemma}
\newtheorem*{proske}{Proof Sketch}


\makeatletter
\newcommand{\thline}{%
    \noalign {\ifnum 0=`}\fi \hrule height 1pt
    \futurelet \reserved@a \@xhline
}
\newcolumntype{"}{@{\hskip\tabcolsep\vrule width 1pt\hskip\tabcolsep}}
\makeatother


%

\author[1]{Jiaqi Zhang\thanks{zjqgr@126.com}}
\author[1]{Kai Zheng\thanks{zhengk92@pku.edu.cn}}
\author[1]{Wenlong Mou\thanks{mouwenlong@pku.edu.cn}}
\author[1]{Liwei Wang\thanks{wanglw@cis.pku.edu.cn}}
\affil[1]{School of EECS, Peking University}
\begin{document}

\maketitle

\begin{abstract}
    In this paper, we consider efficient differentially private empirical risk minimization from the viewpoint of optimization algorithms. For strongly convex and smooth objectives, we prove that gradient descent with output perturbation not only achieves nearly optimal utility, but also significantly improves the running time of previous state-of-the-art private optimization algorithms, for both $\epsilon$-DP and $(\epsilon, \delta)$-DP. For non-convex but smooth objectives, we propose an RRPSGD (Random Round Private Stochastic Gradient Descent) algorithm, which provably converges to a stationary point with privacy guarantee. Besides the expected utility bounds, we also provide guarantees in high probability form. Experiments demonstrate that our algorithm consistently outperforms existing method in both utility and running time.  
\end{abstract}

\section{Introduction}

Data privacy has been a central concern in statistics and machine learning, especially when utilizing sensitive data such as financial accounts and health-care data. Thus, it is important to design machine learning algorithms which protect users' privacy. As a rigorous and standard concept of privacy, differential privacy \cite{dwork2006calibrating} guarantees that the algorithm learns statistical information of the population, but nothing about individual users. In the framework of differential privacy, there has been a long line of research studying differentially private machine learning algorithms, such as \cite{chaudhuri2011differentially,chaudhuri2013near,lei2011differentially,rubinstein2012learning,talwar2015nearly}.

Among all machine learning models, empirical risk minimization (ERM) plays an important role, as it covers a variety of machine learning tasks. Once we know how to do ERM privately, it is straightforward to obtain differentially private algorithms for a large variety of machine learning problems, such as classification, regression, etc. The earliest representative work of this research line is done by Chaudhuri et al. \cite{chaudhuri2011differentially}. They proposed two approaches to guarantee differential privacy of the output of ERM, namely, output perturbation and objective perturbation. Output perturbation is a variant of Laplace (Gaussian) mechanism, where the stability of exact solutions plays a key role in the analysis. Objective perturbation is done by adding noise to ERM objective and solving precise solution to the new problem. In Kifer et at. \cite{kifer2012private}, they extend the method of objective perturbation, and prove similar results for more general case, especially for high-dimensional learning. 

Both \cite{chaudhuri2011differentially} and \cite{kifer2012private} were discussed in terms of precise solutions to optimization problems. In reality, however, it is not only intractable but also unnecessary to obtain precise solutions. Instead, we always use some optimization algorithms to obtain approximate solutions. In this context, the interaction between privacy-preserving mechanisms and optimization algorithms has non-trivial implications to both sides: running the algorithm for finite cycles of iteration inherently enhances stability; on the other hand, noise added to preserve privacy introduces new challenges to the convergence rate of optimization algorithms. The purpose of this research is therefore two-fold: both utility and time complexity are of central concern.

\begin{table*}[b]
\centering
  \begin{adjustbox}{max width = 1\textwidth}
  \begin{tabular}{|l|c|c|c|c|}
    \hline
     \multirow{2}*{ }& \multicolumn{2}{c|}{Ours} & \multicolumn{2}{c|}{Bassily et al. \cite{bassily2014private} }  \\
     \cline{2-5}
     & Utility & Runtime  & Utility & Runtime \\
     \hline
     $\mu$-S.C., $\epsilon$-DP  & $\mathcal{O}(\frac{d^2}{n^2\varepsilon^2})$ & $\mathcal{O}(nd\log(\frac{n\varepsilon}{d}))$  & $\mathcal{O}(\frac{\log(n)d^2}{n^2\varepsilon^2})$   & $\approx \mathcal{O}(n^3d^3 \min\{1,\varepsilon n,d\log(dn)\})$\\
     \hline
     $\mu$-S.C., $(\epsilon, \delta)$-DP & $\mathcal{O}(\frac{d\log(1/\delta)}{n^2\varepsilon^2})$ & $\mathcal{O}(nd\log(\frac{n\varepsilon}{\sqrt{d\log(\delta)}}))$ & $\mathcal{O}(\frac{d\log^3(n/\delta)}{n^2\varepsilon^2})$ & $\mathcal{O}(n^2d)$   \\
     \hline
     Nonconvex &  $\mathcal{O}(\frac{\sqrt{d}}{n\epsilon} \log \frac{n}{\delta})$ & $\mathcal{O}(n^2d)$ &\multicolumn{2}{c|}{NA} \\
     \hline
  \end{tabular}
  \end{adjustbox}
  \caption{Comparison with existing results (S.C. means strongly convex)}
  \label{comparison1}
\end{table*}

In literature, \cite{bassily2014private} and \cite{song2013stochastic} use stochastic gradient descent (SGD) as the basic optimization algorithm to solve ERM, and add noise to each iteration to achieve $(\varepsilon, \delta)$-differential privacy. Bassily et al. \cite{bassily2014private} develop an efficient implementation of exponential mechanism to achieve $\varepsilon$-differential privacy. Furthermore, they also prove their algorithms match the lower bounds for corresponding problems (ignoring $\log$ factors). Nearly at the same time of this paper, \cite{wu2017bolt} combined output perturbation with permutation SGD according to its stability analysis. However, their utility results only hold for constant number of passes over data, which did not match existing lower bound. Besides these worst-case results, \cite{talwar2014private} gives a more careful analysis based on constraint set geometry, which leads to better utility bounds in specific problems such as LASSO. Despite the success of previous works in terms of utility, there are still much work to do from a practical perspective.
\begin{itemize}
   \item[1.] Both of algorithms proposed in \cite{bassily2014private} and \cite{talwar2014private} have to run at least $\Omega(n^2)$ iterations to reach the ideal accuracy ($n$ is number of data points), which is much slower than non-private version and makes the algorithm impractical for large data sets. Can we do faster while still guarantee privacy and accuracy?
   \item[2.] Note that all existing results only hold for convex ERM, yet non-convex objective functions have been increasingly important, especially in deep neural networks. Can we design an efficient and private optimization algorithms for non-convex ERM with theoretical guarantee?
\end{itemize} 

Fortunately, the answers to above questions are both "yes". In this paper, we will give two efficient algorithms with privacy and utility guarantees. Throughout this paper, we assume the objective function is $\beta$-smooth (See Section \ref{preli} for precise definition), which is a natural assumption in optimization and machine learning. Smoothness allows our algorithm to take much more aggressive gradient steps and converge much faster, which is not fully utilized in previous work like \cite{bassily2014private} and \cite{talwar2014private}. Moreover, smoothness also makes it possible for non-convex case to have theoretical guarantees around stationary points.


 Technically, our work is partially inspired by the work of Hardt et al. \cite{hardt2015train}, in which they established the expected stability $\mathbb{E}\|\mathcal{A}(S)-\mathcal{A}(S')\|$ of SGD ($\mathcal{A}$ is a randomized algorithm, and $S, S'$ are neighboring datasets). Using similar techniques we can derive worst case stability for deterministic algorithms like classical gradient descent, which plays a core role in private algorithm design. For non-convex ERM, we use a variant of Randomized Stochastic Gradient (RSG) algorithm in \cite{ghadimi2013stochastic} to achieve privacy and accuracy at the same time. Our contributions can be summarized as follows:
 
 
\begin{itemize}
  \item[1.] In strongly convex case, by choosing appropriate learning rate, basic gradient descent with output perturbation not only runs much faster than private SGD \cite{bassily2014private}, but also improves its utility by a logarithmic factor, which matches the lower bound in \cite{bassily2014private} \footnote{Here we only consider the performance in terms of $n$ and $d$, which is main concern in learning theory, and regard strongly convex parameter $\mu$ as a constant.}. Besides, we also show its generalization performance.
  \item[2.] We propose a private optimization algorithm for non-convex function, and prove its utility, both in expectation form and high probability form;
  \item[3.] Numerical experiments show that our algorithms consistently outperform existing approaches.
\end{itemize}

In the following, we will give a detailed comparison of our results to existing approaches.

 \textbf{Comparison with existing results} As the closest work to ours is Bassily et al. \cite{bassily2014private}, and their algorithms also match the lower bound in terms of utility, we mainly compare our results with theirs.  Results are summarized in Table \ref{comparison1} (Notations are defined in the next section).
    
 From Table \ref{comparison1}, we can see that our algorithm significantly improves the running time for strongly convex objectives, and achieves slightly better utility guarantee with a $\log$ factor. For non-convex functions, our result is the first differentially private algorithm with theoretical guarantee in this case, to the best of our knowledge.

\section{Preliminaries}
\label{preli}

In this section, we provide necessary background for our analyses, including differential privacy and basic assumptions in convex optimization.

\subsection{Setting}

Throughout this paper, we consider differentially private solutions to the following ERM problem:
\[
  \begin{split}
    \min_{w\in\mathbb{R}^d} & \quad F(w,S):=\frac{1}{n}\sum_{i=1}^n f(w,\xi_i) \\
  \end{split}
\]
where $S=\{(x_1,y_1),\dots,(x_n,y_n)\}, \xi_i=(x_i,y_i)$ is training set, and $\hat{w} := \arg\min_{w} F(w,S)$. The loss function $f$  usually satisfies $f\geq 0$ and we use $f(\cdot)$ to represent $f(\cdot,\xi_i)$ for simplicity.  
\begin{assumption}
  $f(\cdot,\xi_i)$ is $\beta$-smooth, i.e
  \[
  |f(u)-f(v)-\langle\nabla f(v),u-v\rangle| \leq \frac{\beta}{2}\|u-v\|^2
  \]
  If, in addition, $f(\cdot,\xi_i)$ is convex, then above equation reduced to
  \[
  f(u)-f(v)-\langle\nabla f(v),u-v\rangle \leq \frac{\beta}{2}\|u-v\|^2
  \]
\end{assumption}
Actually, $\beta$-smoothness is a common assumption as in \cite{nesterov2013introductory}.

\subsection{Differential Privacy}

Let $S$ be a database containing $n$ data points in the data universe $\mathcal{X}$. Then two databases $S$ and $S'$ are said to be neighbors, if $|S| = |S'| = n$, and they differ in exactly one data point. The concept of differential privacy is defined as follows:
\begin{definition}{(Differential privacy \cite{dwork2006calibrating})}
A randomized algorithm $\mathcal{A}$ that maps input database into some range $\mathcal{R}$ is said to preserve $(\varepsilon, \delta)$-differential privacy, if for all pairs of neighboring databases $S, S'$ and for any subset $A \subset \mathcal{R}$, it holds that
\[
  Pr(\mathcal{A}(S)\in A) \leq Pr(\mathcal{A}(S')\in A)e^\varepsilon + \delta.
\]
In particular, if $\mathcal{A}$ preserves $(\varepsilon,0)$-differential privacy, we say $\mathcal{A}$ is $\varepsilon$-differentially private.
\end{definition}

The core concepte and fundamental tools used in DP are sensitivity and Gaussian (or Laplace) mechanism, introduced as follows:
\begin{definition}{($L_2$-sensitivity)}
The $L_2$-sensitivity of a deterministic query $q(\cdot)$  is defined as 
\[
\Delta_2(q) = \sup_{S,S'} \|q(S) - q(S')\|_2
\]
\end{definition}
Similarly, we can defined $L_1$ sensitivity as $\Delta_1(q) = \sup_{S,S'} \|q(S) - q(S')\|_1$. 
\begin{lemma}{(Laplace and Gaussian Mechanism \cite{dwork2014algorithmic} )} \label{privacy lemma}
  Given any function $q: \mathcal{X}^n \rightarrow R^k$, the Laplace mechanism is defined as :
  \[
    \mathcal{M}_{L} (S, q(\cdot), \epsilon) = q(S) + (Y_1, \dots, Y_k)
  \]
  where $Y_i$ are i.i.d random variables drawn from $\mathrm{Lap}(\Delta_1(q)/\epsilon)$. This mechanism preserves $\varepsilon$-differential privacy. Similarly, for Gaussian mechanism, each $Y_i$ are $i.i.d$ drawn from $\mathcal{N}(0, \sigma^2)$, and let $\sigma = \sqrt{2\ln(1.25/\delta)}\Delta_2(q)/\epsilon$. Gaussian mechanism preserves $(\varepsilon, \delta)$-differential privacy.
\end{lemma}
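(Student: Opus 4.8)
The plan is to treat the two mechanisms separately, reducing each to a pointwise comparison of the output densities on neighboring databases $S,S'$. For the Laplace mechanism I would write the output density under $S$ as $p_S(z) = \prod_{i=1}^k \frac{1}{2b}\exp(-|z_i - q(S)_i|/b)$ with scale $b = \Delta_1(q)/\epsilon$, and likewise $p_{S'}(z)$. Forming the ratio, the normalizing constants cancel and the log-ratio collapses to $\frac{1}{b}\sum_{i=1}^k (|z_i - q(S')_i| - |z_i - q(S)_i|)$. The key step is the coordinatewise reverse triangle inequality $|z_i - q(S')_i| - |z_i - q(S)_i| \le |q(S)_i - q(S')_i|$, after which summing and invoking the definition of $L_1$-sensitivity bounds the exponent by $\frac{1}{b}\|q(S)-q(S')\|_1 \le \Delta_1(q)/b = \epsilon$. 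This yields $p_S(z)/p_{S'}(z) \le e^\epsilon$ for every $z$, and integrating over any measurable $A$ gives pure $\epsilon$-DP (so $\delta = 0$).

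For the Gaussian mechanism the same pointwise bound cannot hold uniformly, since the ratio of two shifted Gaussians is unbounded; the argument must instead control the privacy loss random variable and absorb its heavy-ratio tail into $\delta$. I would first reduce to the worst case $\|q(S)-q(S')\|_2 = \Delta_2(q)$ and set $v := q(S) - q(S')$. Writing $z = q(S) + Y$ with $Y \sim \mathcal{N}(0,\sigma^2 I)$, a direct expansion of $\|z - q(S')\|^2 - \|z - q(S)\|^2 = 2\langle Y, v\rangle + \|v\|^2$ shows the privacy loss $\ln(p_S(z)/p_{S'}(z))$ equals $\frac{\langle Y, v\rangle}{\sigma^2} + \frac{\|v\|^2}{2\sigma^2}$, which is itself Gaussian with mean $\|v\|^2/(2\sigma^2)$ and variance $\|v\|^2/\sigma^2$. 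The central step is then a Gaussian tail estimate: I would bound $\Pr[\,\ln(p_S(z)/p_{S'}(z)) > \epsilon\,]$ using a standard $\mathcal{N}(0,1)$ tail inequality and verify that the prescribed choice $\sigma = \sqrt{2\ln(1.25/\delta)}\,\Delta_2(q)/\epsilon$ forces this probability below $\delta$.

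Finally I would invoke the standard bridging lemma that converts a high-probability bound on the privacy loss into an $(\epsilon,\delta)$ guarantee: letting $B = \{z : p_S(z) > e^\epsilon p_{S'}(z)\}$ and using the tail bound to ensure the $p_S$-mass of $B$ is at most $\delta$, for any $A$ one splits $\Pr[\mathcal{A}(S)\in A] = \int_{A\cap B} p_S + \int_{A\setminus B} p_S \le \delta + e^\epsilon \Pr[\mathcal{A}(S')\in A]$, which is exactly the definition of $(\epsilon,\delta)$-DP.

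I expect the main obstacle to be the Gaussian tail computation, and specifically pinning down the constant $1.25$: the estimate must combine the Gaussian tail of $\langle Y,v\rangle/\sigma^2$ with the deterministic shift term $\|v\|^2/(2\sigma^2)$, and typically invokes the assumption $\epsilon \le 1$, so the arithmetic leading to $\sqrt{2\ln(1.25/\delta)}$ is the delicate part rather than the conceptual structure. The Laplace direction and the bridging argument, by contrast, are routine density manipulations.
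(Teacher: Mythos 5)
Your outline is correct and is precisely the standard proof from the cited reference \cite{dwork2014algorithmic}; the paper itself offers no proof of this lemma (it is quoted as a known result), so there is nothing to diverge from. The one point worth making explicit, which you rightly flag, is that the constant $\sqrt{2\ln(1.25/\delta)}$ in the Gaussian mechanism requires $\varepsilon < 1$ (and a monotonicity argument to reduce to the worst case $\|q(S)-q(S')\|_2 = \Delta_2(q)$), a restriction the lemma as stated in the paper omits.
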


\section{Main Results}

In this section, we present our differentially private algorithms and analyze their utility for strongly convex, general convex and non-convex cases respectively. Because of the limitation of space, we only give proof skecthes of some critical results. For detailed proof, please see the full version of this paper.

\subsection{Convex case} 
\label{sub:convex_case}

We begin our results with the assumption that each $f$ is $\mu$-strongly convex. Our algorithm is a kind of output perturbation mechanism which is similar to Chaudhuri's \cite{chaudhuri2011differentially}, but we do not assume an exact minimizer can be accessed. With strong convexity and smoothness, which are the most common assumptions in machine learning, our algorithm runs significantly faster than Bassily et al. \cite{bassily2014private}, and matches their lower bounds for utility. Furthermore, the number of iterations needed in our algorithm is significantly less than previous approaches, making it scalable with large amount of data. From a practical perspective, our algorithm can achieve both $\varepsilon$-DP and $(\varepsilon,\delta)$-DP by simply adding Laplacian and Gaussian noise respectively. 

  
\begin{algorithm}[!]
\caption{Output Perturbation Full Gradient Descent}\label{NPFGD}
  \begin{algorithmic}[1]
  \renewcommand{\algorithmicrequire}{\textbf{Input:}}
  \renewcommand{\algorithmicensure}{\textbf{Output:}}
  
  \Require $S=\{(x_1,y_1),\dots,(x_n,y_n)\}$, convex loss function $f(\cdot,\cdot)$(with Lipschitz constant $L$), number of iteration $T$, privacy parameters $(\varepsilon,\delta)$, $\eta$, $\Delta$, $w_0$
  \For {$t=0$ to $T-1$}
    \State $w_{t+1} := w_t - \frac{\eta}{n}\sum_{i=1}^n \nabla f(w_t,\xi_i)$
  \EndFor 
  \If {$\delta=0$}
    \State sample $z \sim P(z) \propto \exp(-\frac{\varepsilon\|z\|_2}{\Delta})$ \Comment{(This is for $\epsilon$-DP)}
  \Else
    \State sample $z \sim P(z) \propto \exp(-\frac{\varepsilon^2\|z\|_2^2}{4\log(2/\delta)\Delta^2})$ \Comment{(This is for $(\epsilon, \delta)$-DP)}
  \EndIf
  \Ensure $w_{priv} = w_T + z$
  \end{algorithmic}
\end{algorithm}

As sensitivity serves as an essential technique in the differential privacy analysis, to start with, we will prove the sensitivity of gradient descent based on the idea of Hardt et al. \cite{hardt2015train}. Let $\Delta_T=\Vert w_T-w_T' \Vert_2$ be the $L_2$-sensitivity of an algorithm, where $w_T$ and $w_T'$ are the variables in $T$-th round, for two neighboring databases $S$ and $S'$ respectively. 

\begin{lemma}\label{lemma_con_stab}
Assume $f(\cdot)$ is convex, $\beta$-smooth and $L$-Lipschitz. If we run gradient descent(GD) algorithm with constant step size $\eta \leq \frac{1}{\beta}$ for T steps, then the $L_2$-sensitivity of GD satisfies
\[
\Delta_T \leq \frac{3LT\eta}{ n}
\]
\end{lemma}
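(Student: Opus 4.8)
The plan is to track the running discrepancy $\Delta_t := \|w_t - w_t'\|_2$ between the two gradient descent trajectories and show it grows by at most a fixed additive amount each iteration, following the expansiveness framework of Hardt et al. \cite{hardt2015train}. Without loss of generality I assume $S$ and $S'$ differ only in the $n$-th point, $\xi_n$ versus $\xi_n'$, and that both runs are initialized at the same $w_0$, so $\Delta_0 = 0$. Writing $G_S(w) = w - \frac{\eta}{n}\sum_{i=1}^n \nabla f(w,\xi_i)$ for the full gradient step on $S$ (and $G_{S'}$ analogously), the iterates obey $w_{t+1} = G_S(w_t)$ and $w_{t+1}' = G_{S'}(w_t')$.

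The key step is to split the one-step difference into a same-map piece and a same-point piece:
\[
w_{t+1} - w_{t+1}' = \bigl[G_S(w_t) - G_S(w_t')\bigr] + \bigl[G_S(w_t') - G_{S'}(w_t')\bigr].
\]
For the first bracket I would invoke non-expansiveness: since each $f(\cdot,\xi_i)$ is convex and $\beta$-smooth, the average $F(\cdot,S)$ is convex and $\beta$-smooth as well, and for $\eta \le \frac{1}{\beta} \le \frac{2}{\beta}$ the gradient map $G_S$ is non-expansive, so $\|G_S(w_t) - G_S(w_t')\|_2 \le \|w_t - w_t'\|_2 = \Delta_t$. For the second bracket, $G_S$ and $G_{S'}$ agree on the $n-1$ shared points and differ only through the single differing gradient, giving $G_S(w_t') - G_{S'}(w_t') = -\frac{\eta}{n}\bigl[\nabla f(w_t',\xi_n) - \nabla f(w_t',\xi_n')\bigr]$; by $L$-Lipschitzness each gradient has norm at most $L$, so this piece is bounded by $\frac{2L\eta}{n}$.

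Combining the two pieces with the triangle inequality yields the recursion $\Delta_{t+1} \le \Delta_t + \frac{2L\eta}{n}$, and unrolling from $\Delta_0 = 0$ gives $\Delta_T \le \frac{2LT\eta}{n} \le \frac{3LT\eta}{n}$, which is the claimed sensitivity bound (with room to spare in the constant).

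I expect the main obstacle to be the non-expansiveness claim for the averaged objective, since it rests on the co-coercivity of the gradient of a convex $\beta$-smooth function — precisely the property that forces the step-size restriction $\eta \le \frac{2}{\beta}$ and that separates this argument from a naive estimate. Bounding the shared-point term by $\beta$-smoothness alone would contribute a factor $(1+\eta\beta)$ per iteration and blow up exponentially in $T$; only the non-expansiveness keeps the accumulation \emph{linear} in $T$, so this is the step that must be verified with care.
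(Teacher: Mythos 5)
Your proof is correct, and it reaches the bound by a cleaner route than the paper's. The paper expands $\Delta_{t+1}^2 = \|w_{t+1}-w_{t+1}'\|^2$ directly, applies co-coercivity to the $n-1$ shared summands and Lipschitzness to the differing one, and arrives at the quadratic recursion $\Delta_{t+1}^2 \leq \Delta_t^2 + \frac{4\eta L}{n}\Delta_t + \frac{8\eta^2L^2}{n^2}$, which it then closes by an induction showing $\Delta_t \leq \frac{3Lt\eta}{n}$ propagates (this is where the constant $3$ comes from). You instead use the growth-recursion decomposition of Hardt et al.\ at the level of the norm itself: the same-map piece $G_S(w_t)-G_S(w_t')$ is controlled by non-expansiveness of the gradient map of the convex $\beta$-smooth average $F(\cdot,S)$ for $\eta \leq 2/\beta$ (which is exactly co-coercivity packaged as a lemma), and the same-point piece contributes $\frac{2\eta L}{n}$ by Lipschitzness. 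This yields the linear recursion $\Delta_{t+1}\leq \Delta_t + \frac{2\eta L}{n}$, which unrolls immediately with no induction and gives the sharper constant $2$ in place of $3$. Both arguments rest on the same underlying inequality, but yours isolates it as a reusable non-expansiveness fact and avoids the squared-norm bookkeeping; the paper's squared-norm version is the one that generalizes more directly to the strongly convex case (Lemma \ref{lemma_str_con_stab}), where the contraction factor lives naturally on $\Delta_t^2$. Your identification of non-expansiveness as the step needing care is exactly right: bounding the shared-point term by smoothness alone would give a multiplicative $(1+\eta\beta)$ per step and exponential blow-up in $T$.
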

\begin{proske}
  According te preperties of convex and smooth, one can deduce the following recursion inequalities:
  \begin{align*}
    \Delta_{t+1}^2 \leqslant \Delta_t^2 + \frac{4\eta L}{n}\Delta_t + \frac{8\eta^2 L^2}{n^2}
  \end{align*}
  Then using an induction argument, one obtain the conclusion.
\end{proske}
With the same idea, one can prove following stability results for strongly convex and smooth functions.
\begin{lemma}\label{lemma_str_con_stab}
Assume $f(\cdot)$ is $\mu$-strongly convex, $\beta$-smooth and $L$-Lipschitz. If we run gradient descent(GD) algorithm with constant step size $\eta \leq \frac{1}{\beta+\mu}$ for T steps, then the $L_2$-sensitivity of GD satisfies
\[
\Delta_T \leq \frac{5L(\mu+\beta)}{n\mu\beta}
\]
\end{lemma}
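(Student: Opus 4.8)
The plan is to mirror the strategy behind Lemma~\ref{lemma_con_stab}, but to exploit strong convexity so that the per-step recursion becomes a \emph{contraction}; this keeps the accumulated sensitivity bounded independently of $T$, which is exactly why the stated bound carries no factor of $T$ (in contrast to the $\frac{3LT\eta}{n}$ growth in the merely convex case). Fix two neighboring datasets $S,S'$ that differ only in the $j$-th point, let $w_t,w_t'$ denote the GD iterates started from the common $w_0$, and write $\Delta_t=\|w_t-w_t'\|_2$. Since averaging preserves both properties, $F(\cdot,S)$ and $F(\cdot,S')$ are each $\mu$-strongly convex and $\beta$-smooth.

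First I would decompose one update by adding and subtracting $\eta\nabla F(w_t',S)$:
\begin{align*}
w_{t+1}-w_{t+1}' = \bigl[w_t-\eta\nabla F(w_t,S)\bigr]-\bigl[w_t'-\eta\nabla F(w_t',S)\bigr]-\eta\bigl[\nabla F(w_t',S')-\nabla F(w_t',S)\bigr].
\end{align*}
The first two bracketed terms are the images of $w_t,w_t'$ under the common gradient map $w\mapsto w-\eta\nabla F(w,S)$, while the last term isolates the single coordinate in which the datasets disagree.

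The central step is to show this common map is a strict contraction. Using the standard coercivity inequality for $\mu$-strongly convex, $\beta$-smooth functions (see, e.g., \cite{nesterov2013introductory}),
\begin{align*}
\langle \nabla F(u,S)-\nabla F(v,S),\,u-v\rangle \geq \frac{\mu\beta}{\mu+\beta}\|u-v\|_2^2+\frac{1}{\mu+\beta}\|\nabla F(u,S)-\nabla F(v,S)\|_2^2,
\end{align*}
and expanding $\|u-v-\eta(\nabla F(u,S)-\nabla F(v,S))\|_2^2$, the step-size choice $\eta\le\frac{1}{\mu+\beta}\le\frac{2}{\mu+\beta}$ renders the squared-gradient term nonpositive, leaving
\[
\|u-v-\eta(\nabla F(u,S)-\nabla F(v,S))\|_2\leq\sqrt{1-\tfrac{2\eta\mu\beta}{\mu+\beta}}\,\|u-v\|_2\leq\Bigl(1-\tfrac{\eta\mu\beta}{\mu+\beta}\Bigr)\|u-v\|_2.
\]
Meanwhile $L$-Lipschitzness bounds the disagreement term by $\eta\|\nabla F(w_t',S')-\nabla F(w_t',S)\|_2\le\frac{2\eta L}{n}$, since only the $j$-th summand differs and each gradient has norm at most $L$. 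Setting $c=1-\frac{\eta\mu\beta}{\mu+\beta}<1$, this yields the recursion $\Delta_{t+1}\le c\,\Delta_t+\frac{2\eta L}{n}$ with $\Delta_0=0$.

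Finally I would unroll this geometric recursion: iterating from $\Delta_0=0$ gives $\Delta_T\le\frac{2\eta L}{n}\sum_{k=0}^{T-1}c^k\le\frac{2\eta L/n}{1-c}$, and since $1-c=\frac{\eta\mu\beta}{\mu+\beta}$ this collapses to $\Delta_T\le\frac{2L(\mu+\beta)}{n\mu\beta}$ for every $T$ — comfortably within the claimed constant (a cruder estimate of the contraction factor or of the perturbation term produces exactly the stated $\frac{5L(\mu+\beta)}{n\mu\beta}$). I expect the main obstacle to be the contraction estimate itself: one must invoke the correct strong-convexity-plus-smoothness coercivity inequality and verify that $\eta\le\frac{1}{\mu+\beta}$ suffices to discard the squared-gradient term. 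Once the factor $c<1$ is in hand, summing the geometric series is routine, and the qualitative gain over Lemma~\ref{lemma_con_stab} is precisely that strong convexity contracts, rather than merely preserves, the error accumulated in earlier rounds.
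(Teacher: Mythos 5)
Your proof is correct, and it arrives at the bound $\frac{2L(\mu+\beta)}{n\mu\beta}$, which is even slightly sharper than the paper's constant of $5$. Both arguments ultimately rest on the same coercivity inequality for $\mu$-strongly convex, $\beta$-smooth functions, and both isolate the single differing data point, but the executions differ meaningfully. The paper expands $\Delta_{t+1}^2$ and splits the gradient difference $\nabla F(w_t,S)-\nabla F(w_t',S')$ into the $n-1$ common summands (to which coercivity is applied) plus the differing summand evaluated at the two distinct iterates $w_t$ and $w_t'$; this produces a quadratic recursion $\Delta_{t+1}^2 \leq \bigl[1-\tfrac{2(n-1)\eta\mu\beta}{n(\mu+\beta)}\bigr]\Delta_t^2 + \tfrac{4\eta L}{n}\Delta_t + \tfrac{8\eta^2L^2}{n^2}$ with a cross term, which they then close by guessing the bound and verifying it inductively. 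You instead add and subtract $\eta\nabla F(w_t',S)$ so that both iterates pass through the \emph{same} gradient map $w\mapsto w-\eta\nabla F(w,S)$ (coercivity applied to the full average, which inherits $\mu$-strong convexity and $\beta$-smoothness), and the dataset discrepancy appears as an additive perturbation of norm at most $\tfrac{2\eta L}{n}$ evaluated at the single point $w_t'$. This yields the linear recursion $\Delta_{t+1}\leq c\,\Delta_t+\tfrac{2\eta L}{n}$ with $c=1-\tfrac{\eta\mu\beta}{\mu+\beta}$, which unrolls as a geometric series with no induction hypothesis to guess. Your route is the cleaner one: it makes transparent why the bound is $T$-independent (the contraction absorbs past perturbations) and avoids the cross term entirely; the paper's route has the mild advantage of paralleling its convex-case proof of Lemma~\ref{lemma_con_stab} almost line by line. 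One small point worth making explicit if you write this up: the step $\sqrt{1-\tfrac{2\eta\mu\beta}{\mu+\beta}}\leq 1-\tfrac{\eta\mu\beta}{\mu+\beta}$ needs $\tfrac{2\eta\mu\beta}{\mu+\beta}\leq 1$, which holds because $\eta\leq\tfrac{1}{\mu+\beta}$ and $2\mu\beta\leq(\mu+\beta)^2$.
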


\begin{theorem}\label{OPFGD_privacy}
  Algorithm \ref{NPFGD} is $(\varepsilon,\delta)$-differential private for any $\varepsilon > 0$ and $\delta \in [0,1)$, with concrete setting in below theorems.
\end{theorem}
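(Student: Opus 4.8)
The plan is to observe that the sequence of gradient-descent iterates $w_0,\dots,w_T$ is a purely deterministic function of the dataset $S$, so the map $S \mapsto w_T$ is a deterministic query $q(S):=w_T$ whose $L_2$-sensitivity $\Delta_2(q)=\Delta_T$ is controlled by Lemma \ref{lemma_con_stab} (convex) or Lemma \ref{lemma_str_con_stab} (strongly convex). The output $w_{priv}=w_T+z$ is then nothing but this query perturbed by additive noise $z$, and privacy of the whole algorithm reduces to privacy of the single noise step, provided the scale parameter $\Delta$ is taken to be at least the corresponding sensitivity bound. Since differential privacy is closed under post-processing and the iterates carry no extra randomness, it suffices to analyze the map $w_T \mapsto w_T+z$ for the two choices of noise distribution.

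For $\delta=0$ the noise has density $P(z)\propto\exp(-\varepsilon\|z\|_2/\Delta)$, an $L_2$ analogue of the Laplace mechanism. Fix any output point $o\in\mathbb{R}^d$ and two neighboring databases $S,S'$ with iterates $w_T,w_T'$. The two output densities differ only by a translation, so their normalizing constants are identical and the likelihood ratio equals $\exp\!\big(-\tfrac{\varepsilon}{\Delta}(\|o-w_T\|_2-\|o-w_T'\|_2)\big)$. By the reverse triangle inequality the exponent is bounded in absolute value by $\tfrac{\varepsilon}{\Delta}\|w_T-w_T'\|_2=\tfrac{\varepsilon}{\Delta}\Delta_T\le\varepsilon$ once $\Delta\ge\Delta_T$. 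Integrating this pointwise ratio bound over any measurable $A$ yields $\Pr(w_{priv}\in A\mid S)\le e^{\varepsilon}\Pr(w_{priv}\in A\mid S')$, i.e.\ $\varepsilon$-DP.

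For $\delta>0$ the density $P(z)\propto\exp\!\big(-\varepsilon^2\|z\|_2^2/(4\log(2/\delta)\Delta^2)\big)$ is exactly an isotropic Gaussian $\mathcal{N}(0,\sigma^2 I_d)$ with $\sigma=\sqrt{2\log(2/\delta)}\,\Delta/\varepsilon$. This is, up to the harmless replacement of $1.25/\delta$ by the larger $2/\delta$ (which only increases the noise), the calibration required by the Gaussian mechanism of Lemma \ref{privacy lemma} for a query of sensitivity $\Delta\ge\Delta_T$. Hence Lemma \ref{privacy lemma} gives $(\varepsilon,\delta)$-DP directly; alternatively one reproves it by bounding the privacy-loss variable $\log\frac{P(z)}{P(z+(w_T-w_T'))}$, showing it is sub-Gaussian with the stated parameters, and controlling its tail at level $\delta$.

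The genuinely delicate points here are bookkeeping rather than conceptual. First, one must verify that the step-size restrictions assumed in the stability lemmas ($\eta\le 1/\beta$ in the convex case, $\eta\le 1/(\beta+\mu)$ in the strongly convex case) are consistent with the parameters used in the downstream utility theorems, and that $\Delta$ is set to the exact sensitivity bound appearing there. Second, in the pure-DP case the mechanism is the $L_2$-norm exponential mechanism rather than coordinate-wise Laplace noise, so the argument must proceed through the reverse triangle inequality for the Euclidean norm rather than through $\Delta_1$; the main pitfall is accidentally invoking the $L_1$ version of Lemma \ref{privacy lemma}. I expect the sensitivity substitution and the matching of the Gaussian constant to be the principal, though routine, obstacles.
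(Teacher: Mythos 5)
Your proposal is correct and follows essentially the same route as the paper: bound the $L_2$-sensitivity of the deterministic map $S\mapsto w_T$ via Lemma \ref{lemma_con_stab} or Lemma \ref{lemma_str_con_stab}, then invoke output perturbation (the paper's proof is literally a one-line combination of those lemmas with Lemma \ref{privacy lemma}). Your treatment is in fact slightly more careful on one point the paper elides --- the pure-DP noise in Algorithm \ref{NPFGD} is the $L_2$-norm exponential distribution rather than coordinate-wise Laplace, so the direct likelihood-ratio argument via the reverse triangle inequality that you give is the right way to justify that step, rather than the $\Delta_1$-based Laplace statement of Lemma \ref{privacy lemma}.
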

\begin{theorem}\label{OPFGD_utility_sc}
If $f(\cdot)$ is $\mu$-strongly convex, $\beta$-smooth. Assume $\|\hat{w}\|\leq D$ and $f(\cdot)$ is $L$-Lipschitz for all $\{w:\|w\|\leq 2D\}$. Let $\eta = \frac{1}{\mu+\beta}$ and $\Delta = \frac{5L(1+\beta/\mu)}{n\beta}$, For $w_{priv}$ output by Algorithm \ref{NPFGD}, we have the following.
\begin{enumerate}
    \item For $\varepsilon$-differential privacy, if we set $T = \Theta\left(\left[\frac{\mu^2+\beta^2}{\mu\beta}\log(\frac{\mu^2n^2\varepsilon^2D^2}{L^2d^2})\right]\right)$. Then,
    \[
      \mathbb{E}\ F(w_{priv},S) - F(\hat{w},S)  \leqslant O\left(\frac{\beta L^2d^2}{n^2\varepsilon^2\mu^2}\right)
    \]
    \item For $(\varepsilon,\delta)$-differential privacy, if we set $T = \Theta\left(\left[\frac{\mu^2+\beta^2}{\mu\beta}\log(\frac{\mu^2n^2\varepsilon^2D^2}{L^2d\log(1/\delta)})\right]\right)$. Then,
    \[
      \mathbb{E}\ F(w_{priv},S) - F(\hat{w},S) \leqslant O\left(\frac{\beta L^2d\log(1/\delta)}{n^2\varepsilon^2\mu^2}\right)
    \]
\end{enumerate}
\end{theorem}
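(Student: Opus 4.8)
The plan is to split the excess empirical risk into an \emph{optimization error} and a \emph{noise error}, bound each separately, and then choose $T$ so that the optimization error is dominated by the noise error. First I would use $\beta$-smoothness at the minimizer: since $\nabla F(\hat{w},S)=0$, smoothness gives
\[
F(w_{priv},S)-F(\hat{w},S)\leq \frac{\beta}{2}\|w_{priv}-\hat{w}\|^2=\frac{\beta}{2}\|w_T+z-\hat{w}\|^2 .
\]
Because $z$ is zero-mean and independent of the deterministic iterate $w_T$, taking expectation over the noise decouples the two contributions:
\[
\mathbb{E}\,[F(w_{priv},S)-F(\hat{w},S)]\leq \frac{\beta}{2}\bigl(\|w_T-\hat{w}\|^2+\mathbb{E}\|z\|^2\bigr).
\]

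Next I would control the optimization error $\|w_T-\hat{w}\|^2$ via the linear convergence of gradient descent. For $\mu$-strongly convex and $\beta$-smooth $F$ with $\eta=1/(\mu+\beta)$, the standard co-coercivity inequality yields $\|w_{t+1}-\hat{w}\|^2\leq\bigl(1-\tfrac{2\mu\beta}{(\mu+\beta)^2}\bigr)\|w_t-\hat{w}\|^2$, so $\|w_T-\hat{w}\|^2\leq\bigl(1-\tfrac{2\mu\beta}{(\mu+\beta)^2}\bigr)^T\|w_0-\hat{w}\|^2$. Taking $w_0=0$ gives $\|w_0-\hat{w}\|\leq D$, and since the contraction is monotone, every iterate stays in $\{w:\|w\|\leq 2D\}$, so the $L$-Lipschitz assumption — and hence the sensitivity bound of Lemma~\ref{lemma_str_con_stab} — remains valid throughout. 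Because $\tfrac{(\mu+\beta)^2}{2\mu\beta}=\Theta(\tfrac{\mu^2+\beta^2}{\mu\beta})$, the stated choice of $T$ is precisely what is required to drive $\|w_T-\hat{w}\|^2$ below the target noise level.

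Then I would compute the noise second moment. Note $\Delta=\tfrac{5L(1+\beta/\mu)}{n\beta}=\tfrac{5L(\mu+\beta)}{n\mu\beta}$ coincides with the sensitivity bound of Lemma~\ref{lemma_str_con_stab}, so Theorem~\ref{OPFGD_privacy} supplies privacy. For the $\varepsilon$-DP case, passing to spherical coordinates shows $\|z\|$ is $\mathrm{Gamma}(d,\varepsilon/\Delta)$-distributed, whence $\mathbb{E}\|z\|^2=\tfrac{d(d+1)\Delta^2}{\varepsilon^2}=O(\tfrac{d^2\Delta^2}{\varepsilon^2})$; for the $(\varepsilon,\delta)$-DP case, $z$ is an isotropic Gaussian with per-coordinate variance $\tfrac{2\log(2/\delta)\Delta^2}{\varepsilon^2}$, giving $\mathbb{E}\|z\|^2=\tfrac{2d\log(2/\delta)\Delta^2}{\varepsilon^2}=O(\tfrac{d\log(1/\delta)\Delta^2}{\varepsilon^2})$. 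Substituting $\Delta^2=\tfrac{25L^2(\mu+\beta)^2}{n^2\mu^2\beta^2}$ and multiplying by $\beta/2$, the factor $(\mu+\beta)^2/\beta$ collapses to $\Theta(\beta)$ (using $\beta\geq\mu$), producing exactly the claimed $O(\tfrac{\beta L^2 d^2}{n^2\varepsilon^2\mu^2})$ and $O(\tfrac{\beta L^2 d\log(1/\delta)}{n^2\varepsilon^2\mu^2})$.

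The main obstacle I anticipate is twofold. First, the exact second-moment computation for the rotationally symmetric Laplace-type distribution in the $\varepsilon$-DP case: it is the $d(d+1)$ factor from the radial $\mathrm{Gamma}$ density that ultimately yields the $d^2$ dependence rather than $d$, and this must be done carefully. Second, one must verify that the iterates never leave the ball of radius $2D$, since the entire sensitivity-based privacy guarantee rests on $f$ being $L$-Lipschitz on that region; this is the step that ties the optimization analysis back to privacy and must be argued before the sensitivity bound $\Delta$ can legitimately be invoked.
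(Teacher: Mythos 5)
Your proposal is correct and reaches the paper's bounds by essentially the same two-term (optimization error plus noise error) strategy, but with one real difference in the decomposition. The paper applies $\beta$-smoothness at $w_T$, writing $F(w_{priv},S)\le F(w_T,S)+\langle\nabla F(w_T,S),z\rangle+\frac{\beta}{2}\|z\|^2$, kills the cross term in expectation, and then invokes the \emph{function-value} linear convergence of gradient descent (Lemma~\ref{convergence_sc}) to bound $F(w_T,S)-F(\hat w,S)\le\frac{\beta}{2}e^{-2\eta\mu\beta T/(\mu+\beta)}D^2$. You instead apply smoothness at the minimizer $\hat w$ (using $\nabla F(\hat w,S)=0$) and control the \emph{iterate} distance via the co-coercivity contraction $\|w_{t+1}-\hat w\|^2\le(1-\tfrac{2\mu\beta}{(\mu+\beta)^2})\|w_t-\hat w\|^2$. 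The two routes give identical rates up to constants, and your noise computations (the $d(d+1)\Delta^2/\varepsilon^2$ second moment of the rotationally symmetric Laplace-type noise and the $2d\log(2/\delta)\Delta^2/\varepsilon^2$ Gaussian second moment) match the paper's Lemmas~\ref{d_gamma_var} and~\ref{d_gaussian_var} exactly. One thing your route buys for free is the containment $\|w_t\|\le 2D$ of all iterates, which is exactly what legitimizes invoking the $L$-Lipschitz assumption inside the sensitivity bound of Lemma~\ref{lemma_str_con_stab}; the paper asserts this containment only in passing inside the proof of that lemma, so your explicit monotone-contraction argument is a small but genuine tightening of the exposition.
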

\begin{proske}
  For $\epsilon$-DP, we have 
  \begin{align*}
    &\mathbb{E} F(w_{priv},S) - F(\hat{w},S) \\
    \leq & \mathbb{E} \left[ F(w_T,S) + \langle \nabla F(w_T,S), z\rangle + \frac{\beta}{2}\|z\|^2 \right] - F(\hat{w},S) \\
    = &\left( F(w_T,S) - F(\hat{w},S) \right) + \frac{\beta}{2}\mathbb{E}\|z\|^2 \\
    \leqslant & \frac{\beta}{2}\exp\left(-\frac{2\mu\beta T}{(\mu+\beta)^2}\right)D^2 + \frac{25L^2(\mu+\beta)^2(d+1)d}{n^2\varepsilon^2\mu^2\beta}
  \end{align*}
  where the last inequality comes from exponential convergence rate of GD and the magnitude of noise, which is closely related to stability results. Thus we obatin desired utility guarantees with above optimal choice of $T$. The proof of $(\epsilon, \delta)$-DP is exactly the same.
\end{proske}

It is worth noticing that the results of Bassily et al. \cite{bassily2014private}, hold without smoothness assumption, but their method does not improve too much even with this assumption. This is because they use an SGD-based algorithm, where smoothness could not help in the convergence rate, and where step sizes have to be set conservatively. For strongly convex functions, smoothness assumption is necessary when we use a perturbation-based algorithm. As a result, the running time of our algorithm is of the order given in table \ref{comparison1} , because each iteration requires $O(nd)$ computation. On the other hand, a function can become very steep without this assumption, so adding noise to the result of gradient method may cause an unbounded error to the function value. 

For the generalization ability of our algorithm, we assume all examples $\xi_i$ are i.i.d drawn from the unknown distribution $\mathcal{D}$, and $w^*$ is the minimizer of population risk $G(w) = \mathbb{E}_{\xi} f(w, \xi)$. Define excess risk of any $w$ as $\mathrm{ExcessRisk}(w):= G(w) - G(w^*)$. Here we only discuss excess risk of $(\varepsilon, \delta)$-differential privacy algorithm, for $\varepsilon$-differential privacy algorithm, the approach is the same. 

The most usual technique to obtain excess risk is to use Theorem 5 and inequality (18) in \cite{shalev2009stochastic}. In this case, we assume loss function $f(w, \xi)$ is $\mu$-strongly convex and $L$-Lipschitz continuous (w.r.t $w$) within a ball of radius $R$, which includes the population minimizer $w^*$. 
Thus, by substituting our utility bound in Theorem \ref{OPFGD_utility_sc}, we can obtain: with probability at least $1-\gamma$, $\mathrm{ExcessRisk}(w_{priv}) \leqslant \tilde{\mathcal{O}}(\frac{L\sqrt{\beta d}}{n\epsilon \mu \gamma})$\footnote{Note $\frac{1}{\gamma}$ dependence on failure probability $\gamma$ can be improved to $\log \frac{1}{\gamma}$ by boosting the confidence method used in \cite{shalev2010learnability} } ($\tilde{\mathcal{O}}$ means we ignore all $\log$ factors). Another method to obtain excess risk is to directly use the relation between the stability of gradient descent and its excess risk, as shown in \cite{hardt2015train}. Then we have: 
\begin{align*}
  \mathrm{ExcessRisk}(w_{priv}) & =  G(w_{priv}) - G(w_T) + G(w_T) - G(w^*) \\
  & \leqslant L\| z\| + \mathrm{Error_{opt}}(w_T) + L\Delta_T
\end{align*}
where $\mathrm{Error_{opt}}(w_T)$ represents the empirical optimization error. Note $\|z\|$ term in above inequality can be bounded through tail bound of $\chi^2$ distribution, hence, it will lead to nearly same excess risk bound as the first method. 

If we remove the strong convexity property of our loss function, we have the following theoretical guarantee of Algorithm \ref{NPFGD}.
\begin{theorem}\label{OPFGD_utility}
If $f(\cdot)$ is $L$-Lipschitz, convex and $\beta$-smooth on $\mathbb{R}^d$. Assume $\|\hat{w}\|\leq D$ and let $\eta = \frac{1}{\beta}$ and $\Delta = \frac{3LT}{\beta n}$, then for $w_{priv}$ output by Algorithm \ref{NPFGD}, we have the following.
  \begin{enumerate}
    \item For $\varepsilon$-differential privacy, if we set $T = \Theta \left(\left[\frac{\beta^2n^2\varepsilon^2D^2}{L^2d^2}\right]^\frac{1}{3}\right)$, then,
    \[
    \mathbb{E}\ F(w_{priv},S) - F(\hat{w},S) \leqslant O\left(\left[\frac{\sqrt{\beta}Ld\|\hat{w}\|^2}{n\varepsilon}\right]^\frac{2}{3}\right)
    \]
    \item For $(\varepsilon,\delta)$-differential privacy, if we set $T = \Theta\left(\left[\frac{\beta^2n^2\varepsilon^2D^2}{L^2d\log(1/\delta)}\right]^\frac{1}{3}\right)$ then,
    \[
    \mathbb{E} F(w_{priv},S) - F(\hat{w},S)  \leqslant O\left(\left[\frac{L\sqrt{\beta d\log(1/\delta)}\|\hat{w}\|^2}{n\varepsilon}\right]^\frac{2}{3}\right)
    \]
  \end{enumerate}
\end{theorem}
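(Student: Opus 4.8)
The plan is to reuse verbatim the decomposition employed in the proof sketch of Theorem \ref{OPFGD_utility_sc}, replacing only the linear convergence rate (valid under strong convexity) with the sublinear rate that holds for merely convex, $\beta$-smooth objectives. Since $F(\cdot,S)$ inherits $\beta$-smoothness from each $f$, and since the injected noise $z$ is zero-mean, I would first write
\begin{align*}
  \mathbb{E}\, F(w_{priv},S) - F(\hat{w},S) &\leq \mathbb{E}\left[ F(w_T,S) + \langle \nabla F(w_T,S), z\rangle + \tfrac{\beta}{2}\|z\|^2\right] - F(\hat{w},S) \\
  &= \big(F(w_T,S) - F(\hat{w},S)\big) + \tfrac{\beta}{2}\mathbb{E}\|z\|^2,
\end{align*}
so the bound splits cleanly into an optimization-error term and a noise term.

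For the optimization error, in place of the exponential contraction I would invoke the standard guarantee for gradient descent on a convex $\beta$-smooth function with step size $\eta = 1/\beta$, namely $F(w_T,S) - F(\hat{w},S) \leq \frac{\beta \|w_0-\hat{w}\|^2}{2T} \leq \frac{\beta D^2}{2T}$ (taking $w_0=0$ and using $\|\hat{w}\|\leq D$). For the noise term the two mechanisms must be treated separately. Under $\varepsilon$-DP the radius $\|z\|$ of the density $\propto \exp(-\varepsilon\|z\|/\Delta)$ is $\mathrm{Gamma}(d,\varepsilon/\Delta)$-distributed, so $\mathbb{E}\|z\|^2 = d(d+1)\Delta^2/\varepsilon^2$; under $(\varepsilon,\delta)$-DP the noise is Gaussian with $\sigma^2 = 2\log(2/\delta)\Delta^2/\varepsilon^2$, so $\mathbb{E}\|z\|^2 = d\sigma^2$. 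Substituting the sensitivity-driven choice $\Delta = 3LT/(\beta n)$ supplied by Lemma \ref{lemma_con_stab}, each noise term takes the form $B T^2$: for $\varepsilon$-DP, $B = \Theta\!\big(d^2 L^2/(\beta n^2\varepsilon^2)\big)$, and for $(\varepsilon,\delta)$-DP, $B = \Theta\!\big(d\log(1/\delta) L^2/(\beta n^2\varepsilon^2)\big)$.

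The combined bound is thus $g(T) = A/T + B T^2$ with $A = \beta D^2/2$, and the final step is to minimize it explicitly: $g'(T)=0$ gives $T^* = \Theta\big((A/B)^{1/3}\big)$, which reproduces the stated choices of $T$, while at the optimum $g(T^*) = \Theta\big(A^{2/3}B^{1/3}\big)$, simplifying to the claimed $\tfrac{2}{3}$-power utility bounds (with $d^2$ replaced by $d\log(1/\delta)$ in the Gaussian regime). I expect the only nonroutine step to be the second-moment computation of the $\varepsilon$-DP noise: recognizing that the $L_2$-Laplace density reduces to a Gamma law on the radius is precisely what produces the extra factor of $d$, and hence the $d^2$-versus-$d$ discrepancy between the two regimes; getting this exponent right is essential for both $T^*$ and the final utility bound to come out correctly.
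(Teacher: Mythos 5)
Your proposal is correct and matches the paper's own proof essentially step for step: the same smoothness-based decomposition into optimization error plus $\tfrac{\beta}{2}\mathbb{E}\|z\|^2$, the same $O(\beta D^2/T)$ convex-GD rate, the same Gamma/Gaussian second-moment computations giving $d(d+1)\Delta^2/\varepsilon^2$ versus $d\sigma^2$, and the same balancing of $A/T + BT^2$ to pick $T$. The only differences are immaterial constants.
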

Though the utility guarantee is weaker than Bassily et al. \cite{bassily2014private} in general convex case by a factor of $O(\frac{1}{ \sqrt[3]{n} })$, but when $d$ is smaller than $n$,  then both bounds are below the typical $\tilde{\Theta}(n^{-\frac{1}{2}})$ generalization error in learning theory.\footnote{Actually without any other assumption, the performances of almost all private algorithms have polynomial dependence over $d$, which will hurt generalization error in some degree for large $d$.} So our algorithm does not harm accuracy of machine learning task indeed. Furthermore, compared with \cite{bassily2014private}, our algorithm runs uniformly faster for pure $\epsilon$-DP, and also faster for $(\epsilon,\delta)$-DP for high-dimensional problems. This acceleration is mainly due to smoothness of objective function. Moreover, our experimental results show that our algorithm is significantly better than \cite{bassily2014private} under both convex and strongly convex settings, in the sense that our algorithm not only achieves a lower empirical error but also runs faster than theirs (See Section \ref{sec:experimental_results} for more details). As for generalization property for general convex loss, we can solve it along the same road as strongly convex case by adding a regularization term $\frac{\mu}{2} \|w\|_2^2$ (where $\mu = \frac{\sqrt{2}L^{1/2}(\beta d)^{1/4}}{\sqrt{n\epsilon \gamma}R}$). Therefore, in convex case, we can obtain: with probability at least $1-\gamma$, $\mathrm{ExcessRisk}(w_{priv}) \leqslant \tilde{\mathcal{O}}(\frac{RL^{1/2}(\beta d)^{1/4}}{\sqrt{n\epsilon \gamma}})$.

\subsection{Nonconvex case} 
\label{sec:nonconvex_case}

In this section, we propose a random round private SGD which is similar with private SGD in \cite{bassily2014private}. We will show that our algorithm can differential privately (we only focus on $(\varepsilon,\delta)$-DP this time) find a stationary point in expectation with diminishing error. To the best of our knowledge, this is the first theoretical result about differentially private non-convex optimization problem and this algorithm also achieve same utility bound with \cite{bassily2014private}, which are known to be near optimal for more restrictive convex case. Our algorithm is inspired by the work of Bassily et al. \cite{bassily2014private} and Ghadimi et al. \cite{ghadimi2013stochastic}.

  
\begin{algorithm}[h]
\caption{Random Round Private Stochastic Gradient Descent}\label{DPSGD}
  \begin{algorithmic}[1]
  \renewcommand{\algorithmicrequire}{\textbf{Input:}}
  \renewcommand{\algorithmicensure}{\textbf{Output:}}
  
  \Require $S=\{(x_1,y_1),\dots,(x_n,y_n)\}$, loss function $f(\cdot,\cdot)$ (with Lipschitz constant $L$), privacy parameters $(\varepsilon,\delta)(\delta > 0)$, a probability distribution $\mathbb{P}$ (See distribution setting in the Theorem \ref{NSGD_utility}) over $[n^2]$, learning rate $\{\eta_k\}$
  \State draw $R$ from $\mathbb{P}$
  \For {$t=0$ to $R-1$}
    \State sample $\xi \sim U(S)$
    \State sample $z_t \sim  \exp(-\frac{\varepsilon^2\|z\|_2^2}{8L^2\log(3n/\delta)\log(2/\delta)})$
    \State $w_{t+1} := w_t - \eta_t ( \nabla f(w_t,\xi) + z_t ) $
  \EndFor 
  \Ensure $w_{priv} = w_R$
  \end{algorithmic}
\end{algorithm}

Note our iteration times $R$ satisfies $R \leq n^2$, so the same argument with bassily et al. \cite{bassily2014private} can be applied to ensure the DP property of Algorithm \ref{DPSGD}. The technical details for proofs are deferred to appendix. The utility guarantee mainly comes from the convergence result of SGD (Ghadimi et al. \cite{ghadimi2013stochastic}) under non-convex setting.
\begin{theorem}{(Privacy guarantee)}\label{NSGD_privacy}
  Algorithm \ref{DPSGD} is $(\varepsilon,\delta)$ differential private for any $\varepsilon \in (0,1]$ and $\delta \in (0,1)$.
\end{theorem}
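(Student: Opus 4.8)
The plan is to follow the standard three-layer privacy accounting for noisy stochastic gradient descent, exactly as in Bassily et al.~\cite{bassily2014private}, and then to dispose of the random stopping time $R$ by a data-independent mixture argument. Since the output $w_{priv}=w_R$ is a post-processing of the trajectory of noisy updates, it suffices to bound the privacy of the sequence of iterates. First I would read off the noise scale from the sampling density in line~4 of Algorithm~\ref{DPSGD}: as the density is proportional to $\exp(-\varepsilon^2\|z\|_2^2/(8L^2\log(3n/\delta)\log(2/\delta)))$, the noise is $z_t\sim\mathcal{N}(0,\sigma^2 I)$ with $\sigma^2=4L^2\log(3n/\delta)\log(2/\delta)/\varepsilon^2$.

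Next I would fix the number of rounds to an arbitrary $T\le n^2$ and analyze a single update. Because each $f(\cdot,\xi)$ is $L$-Lipschitz, $\|\nabla f(w,\xi)\|_2\le L$, so for neighboring sets $S,S'$ the single-point gradient has $L_2$-sensitivity at most $2L$. By the Gaussian mechanism (Lemma~\ref{privacy lemma}), one noisy gradient evaluated on a \emph{fixed} index is $(\varepsilon_0,\delta_0)$-DP with $\varepsilon_0=2L\sqrt{2\ln(1.25/\delta_0)}/\sigma$. The crucial amplification comes from line~3: the index is drawn uniformly from $[n]$, so a modified point is touched with probability only $1/n$. Privacy amplification by subsampling then upgrades each step to $(\tilde\varepsilon,\tilde\delta)$-DP with $\tilde\varepsilon=\log(1+\tfrac1n(e^{\varepsilon_0}-1))\le \tfrac{2\varepsilon_0}{n}$ (the bound $e^{\varepsilon_0}-1\le 2\varepsilon_0$ requires $\varepsilon_0\le 1$, which is where the hypothesis $\varepsilon\in(0,1]$ enters) and $\tilde\delta=\delta_0/n$.

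I would then compose over the $T$ steps by the strong composition theorem: the trajectory is $(\varepsilon',\,T\tilde\delta+\delta'')$-DP with $\varepsilon'=\sqrt{2T\ln(1/\delta'')}\,\tilde\varepsilon+T\tilde\varepsilon(e^{\tilde\varepsilon}-1)$. Taking the worst case $T=n^2$, choosing $\delta_0=\delta/(2n)$ (so that $T\tilde\delta=n\delta_0=\delta/2$) and $\delta''=\delta/2$, the $\delta$-budget is exactly $\delta$, while substituting $\tilde\varepsilon=O(\varepsilon_0/n)$ with $\varepsilon_0=2L\sqrt{2\ln(2.5n/\delta)}/\sigma$ cancels the factor of $n$, leaving $\varepsilon'=O\!\big(L\sqrt{\ln(1/\delta)\ln(n/\delta)}/\sigma\big)$; the stated $\sigma$ is calibrated precisely so that $\varepsilon'\le\varepsilon$. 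Finally, since $R$ is drawn from $\mathbb{P}$ independently of $S$, the algorithm is a mixture over the data-independent index $R$ of the fixed-$T$ mechanisms, and a mixture of $(\varepsilon,\delta)$-DP mechanisms is again $(\varepsilon,\delta)$-DP; because the per-step noise $\sigma$ does not grow with $T$, the composed guarantee is monotone in $T$, so the bound for $T=n^2$ dominates every realization $R\le n^2$.

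The main obstacle is parameter bookkeeping rather than any conceptual difficulty: one must verify that a \emph{single} fixed noise level $\sigma$, independent of $T$, simultaneously suffices for the hardest case $T=n^2$ after amplification and composition, and that the linearizations (the subsampling bound $e^{\varepsilon_0}-1\le 2\varepsilon_0$ and the second-order term $T\tilde\varepsilon(e^{\tilde\varepsilon}-1)$ being lower order) are all justified under $\varepsilon\le 1$. Keeping the constants mutually consistent so that the stated density yields $\varepsilon'\le\varepsilon$, and confirming the monotonicity that reduces the random $R\le n^2$ to the deterministic worst case, are the only delicate points.
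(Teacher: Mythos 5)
Your proposal is correct and follows essentially the same route as the paper: a per-step Gaussian-mechanism bound on the privacy loss, amplification by the $1/n$ subsampling of the index, and strong composition over at most $n^2$ rounds with the $\delta$-budget split as $\delta/2 + \delta/2$, reducing the random stopping time $R$ to the worst case $R\le n^2$. The paper phrases the per-step bound in terms of the privacy loss random variable $C_t$ held with high probability rather than an explicit $(\varepsilon_0,\delta_0)$ per-step guarantee, but this is only a presentational difference.
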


\begin{theorem}{(Utility guarantee)}\label{NSGD_utility}
  If $f(\cdot)$ is $L$-Lipschitz and $\beta$-smooth, and we choose $\mathbb{P}$ which satisfies 
  \begin{align*}
    \mathbb{P}(k+1) := & Pr(R=k+1) = \frac{2\eta_k-\beta\eta_k^2}{\sum_{r=0}^{n^2-1} 2\eta_r-\beta\eta_r^2}\\
    & \textbf{for} \quad k = 0,1,\dots,n^2-1.
  \end{align*}
  Assume $\eta_k$ are chosen such that $\eta_k<\frac{2}{\beta}$. Let $\sigma^2 = 4L^2 + \frac{4dL^2\log(3n/\delta)\log(2/\delta)}{\varepsilon^2}$, then for $w_{priv}$ output by Algorithm \ref{DPSGD}, we have the following (the expectation is taken w.r.t $\mathbb{P}$ and $\xi_i$ )
  \[
    \mathbb{E}\|\nabla F(w_{priv},S)\|^2 \leq \frac{\beta[D_F^2 + \sigma^2\sum_{k=0}^{n^2-1} \eta_k^2]}{\sum_{r=0}^{n^2-1} 2\eta_r-\beta\eta_r^2}
  \]
  where $D_F = \sqrt{2(F(w_0,S)-F^*)/\beta}$
  and $F^*$ is a global minimum of $F$, note that $F^* \geq 0$ in our settings. 

  What's more, if we take $\eta_k := \min\{\frac{1}{\beta},\frac{D_F}{\sigma n}\}$ then we get,
      \[
      \mathbb{E}\|\nabla F(w_{priv},S)\|^2 = O\left(\frac{\beta L\sqrt{d\log(n/\delta)\log(1/\delta)}D_F}{n\varepsilon}\right)
      \]
  If in addition, $f(\cdot)$ is convex and $\|\hat{w}\| \leq D$, then we have,
    \[
      \mathbb{E}\ F(w_{priv},S) - F(\hat{w},S) = O\left(\frac{L\sqrt{d\log(n/\delta)\log(1/\delta)}D}{n\varepsilon}\right)
    \]
\end{theorem}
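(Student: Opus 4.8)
The plan is to recognize Algorithm~\ref{DPSGD} as an instance of the randomized-stochastic-gradient scheme of Ghadimi et al.~\cite{ghadimi2013stochastic}, in which the gradient oracle is the privatized stochastic gradient $g_t := \nabla f(w_t,\xi) + z_t$. Conditioned on $w_t$, drawing $\xi\sim U(S)$ gives $\mathbb{E}_\xi \nabla f(w_t,\xi) = \nabla F(w_t,S)$, and $z_t$ is mean-zero, so $g_t$ is an \emph{unbiased} estimator of $\nabla F(w_t,S)$. Its second moment is uniformly controlled: using $L$-Lipschitzness ($\|\nabla f\|\le L$) and independence of the noise, $\mathbb{E}[\|g_t\|^2\mid w_t] \le \|\nabla F(w_t,S)\|^2 + \sigma^2$, where evaluating the Gaussian-type density of $z_t$ yields $\mathbb{E}\|z_t\|^2 = 4dL^2\log(3n/\delta)\log(2/\delta)/\varepsilon^2$ and hence exactly the stated $\sigma^2 = 4L^2 + \mathbb{E}\|z_t\|^2$.

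For the general bound I would apply the $\beta$-smooth descent lemma to $w_{t+1}=w_t-\eta_t g_t$ and take the conditional expectation, giving
\begin{align*}
  \mathbb{E}[F(w_{t+1},S)\mid w_t] \le F(w_t,S) - \Big(\eta_t - \tfrac{\beta\eta_t^2}{2}\Big)\|\nabla F(w_t,S)\|^2 + \tfrac{\beta\eta_t^2}{2}\sigma^2 .
\end{align*}
Taking full expectations, summing over $t=0,\dots,n^2-1$, telescoping the $F$-terms and using $F\ge F^*$ together with $\beta D_F^2 = 2(F(w_0,S)-F^*)$ produces
\begin{align*}
  \sum_{t=0}^{n^2-1}\big(2\eta_t-\beta\eta_t^2\big)\,\mathbb{E}\|\nabla F(w_t,S)\|^2 \le \beta D_F^2 + \beta\sigma^2\sum_{t=0}^{n^2-1}\eta_t^2 .
\end{align*}
The distribution $\mathbb{P}$ is engineered so that its weights are precisely the coefficients $2\eta_t-\beta\eta_t^2$; hence the weighted average of the gradient norms on the left, divided by $\sum_r(2\eta_r-\beta\eta_r^2)$, equals $\mathbb{E}\|\nabla F(w_{priv},S)\|^2$, and dividing through yields the first inequality. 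Since $R$ is drawn from a law depending only on the deterministic step sizes, it is independent of the stochastic gradients, so this reweighting is legitimate under the tower property.

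For the explicit rate I would specialize to the constant step size $\eta=\min\{1/\beta,\,D_F/(\sigma n)\}$. Because $\eta\le 1/\beta$ we have $2\eta-\beta\eta^2\ge\eta$, so the denominator is at least $n^2\eta$ while $\sum_k\eta_k^2=n^2\eta^2$; substitution gives $\mathbb{E}\|\nabla F\|^2 \le \beta D_F^2/(n^2\eta) + \beta\sigma^2\eta$. The balancing choice $\eta=D_F/(\sigma n)$ equalizes the two terms at order $\beta D_F\sigma/n$, and inserting $\sigma = O(L\sqrt{d\log(n/\delta)\log(1/\delta)}/\varepsilon)$ gives the stated non-convex bound.

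For the convex addendum I would instead track the distance recursion: convexity gives $\langle\nabla F(w_t,S),w_t-\hat w\rangle \ge F(w_t,S)-F(\hat w,S)$, so using $\mathbb{E}\|g_t\|^2\le\sigma^2$,
\begin{align*}
  \mathbb{E}\|w_{t+1}-\hat w\|^2 \le \|w_t-\hat w\|^2 - 2\eta_t\big(F(w_t,S)-F(\hat w,S)\big) + \eta_t^2\sigma^2 .
\end{align*}
Summing, telescoping and using a constant $\eta$ gives $\frac{1}{n^2}\sum_t (F(w_t,S)-F(\hat w,S)) \le \frac{\|w_0-\hat w\|^2}{2\eta n^2} + \frac{\eta\sigma^2}{2}$. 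Under a constant step size the weights $2\eta_t-\beta\eta_t^2$ are all equal, so $\mathbb{P}$ collapses to the uniform law over rounds and $\mathbb{E}[F(w_{priv},S)-F(\hat w,S)]$ equals this running average; balancing $\eta\asymp D/(\sigma n)$ with $\|w_0-\hat w\|\le D$ produces the $O(D\sigma/n)$ bound and hence the claimed convex utility. I expect the main obstacle to be the bookkeeping around the random round $R$: one must justify rigorously, via the tower property and the independence of $R$ from the trajectory, that the telescoping-weighted sum of gradient norms is exactly $\mathbb{E}\|\nabla F(w_{priv},S)\|^2$, and one must correctly compute the second moment of the Gaussian-type noise to recover $\sigma^2$; the convex case additionally hinges on observing that constant step sizes make $\mathbb{P}$ uniform, so the standard convex SGD averaging bound applies directly.
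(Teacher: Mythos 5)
Your proposal is correct and follows essentially the same route as the paper: verify that $g_t=\nabla f(w_t,\xi)+z_t$ is an unbiased gradient oracle with variance bounded by $\sigma^2$, then feed this into the randomized-round RSG convergence analysis of Ghadimi et al. The only difference is that the paper invokes their Theorem 2.1 and Corollary 2.2 as black boxes (with $T=n^2$), whereas you re-derive those bounds inline via the descent lemma, the telescoping weighted sum matched to $\mathbb{P}$, and the distance recursion for the convex addendum.
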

\begin{proske}
  Let $G(w_t) = \nabla f(w_t,\xi) + z_t$. Note that over the randomness of $\xi$ and $z_t$, we have $\mathbb{E}G(w_t) = \nabla F(w_t,S)$ and $\mathbb{E}\|G(w_t)-\nabla F(w_t,S)\|^2 \leq 4L^2 + \frac{8L^2\log(3n/\delta)\log(2/\delta)}{\varepsilon^2}$. Thus the theorem holds immediately based on convergence results of \cite{ghadimi2013stochastic}.
\end{proske}
As in convex and strongly convex cases, we are using output perturbation to protect privacy, so it is straightforward to obtain high probability version of this bound based on tail bounds for Laplacian and Gaussian distribution. Thus we only consider high probability bounds for non-convex case. The following lemma serves as an important tool for our high-probability analysis.


\begin{lemma}{\cite{lan2012validation}} \label{martingale}
    Let $X_1, \dots, X_T$ be a martingale difference sequence, i.e., $\mathbb{E}_{t-1}[X_t] = 0$ (where $\mathbb{E}_{t-1}[\cdot]$ denotes the expectation conditioned on all the randomness till time $t-1$) for all $t$. Suppose that for some values $\sigma_t$, for $t= 1,2, \dots, T$, we have $\mathbb{E}_{t-1}[\mathrm{exp}(\frac{X_t^2}{\sigma_t^2})] \leqslant \mathrm{exp}(1)$. Then with probability at least $1-\delta$, we have 
    \begin{equation*}
      \sum_{t=1}^TX_t \leqslant \sqrt{3\log(\frac{1}{\delta})\sum_{t=1}^T\sigma_t^2}
    \end{equation*}
\end{lemma}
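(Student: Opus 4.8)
The plan is to use the exponential (Chernoff) moment method for martingales, in three stages: convert the exponential-square integrability hypothesis into a conditional sub-Gaussian bound on the moment generating function (MGF), propagate this bound through the martingale via the tower property, and finally optimize a Chernoff bound.

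First I would show that the hypothesis $\mathbb{E}_{t-1}[\exp(X_t^2/\sigma_t^2)] \le \exp(1)$ together with $\mathbb{E}_{t-1}[X_t] = 0$ yields a conditional MGF bound of the form $\mathbb{E}_{t-1}[\exp(\lambda X_t)] \le \exp(\tfrac{3}{4}\lambda^2\sigma_t^2)$, at least for $\lambda$ in a suitable range. The mechanism is an elementary pointwise inequality of the type $\exp(a) \le a + \exp(\tfrac{3}{4}a^2)$ applied with $a = \lambda X_t$; taking conditional expectation, the linear term vanishes because $X_t$ is a martingale difference, leaving $\mathbb{E}_{t-1}[\exp(\tfrac{3}{4}\lambda^2 X_t^2)]$. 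I would then write $\exp(\tfrac{3}{4}\lambda^2 X_t^2) = (\exp(X_t^2/\sigma_t^2))^{\theta}$ with $\theta = \tfrac{3}{4}\lambda^2\sigma_t^2$, and, when $\theta \le 1$, apply Jensen's inequality to the concave map $u \mapsto u^\theta$ to obtain $\mathbb{E}_{t-1}[(\exp(X_t^2/\sigma_t^2))^\theta] \le (\mathbb{E}_{t-1}[\exp(X_t^2/\sigma_t^2)])^\theta \le \exp(\theta)$, which is exactly the claimed sub-Gaussian factor.

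Second, writing $S_t = \sum_{s=1}^t X_s$, I would bound the full MGF by conditioning on the past: $\mathbb{E}[\exp(\lambda S_t)] = \mathbb{E}[\exp(\lambda S_{t-1})\,\mathbb{E}_{t-1}[\exp(\lambda X_t)]] \le \exp(\tfrac34\lambda^2\sigma_t^2)\,\mathbb{E}[\exp(\lambda S_{t-1})]$, so that by induction $\mathbb{E}[\exp(\lambda S_T)] \le \exp(\tfrac34\lambda^2\sum_{t=1}^T\sigma_t^2)$. A Markov/Chernoff step then gives $\Pr(S_T \ge u) \le \exp(-\lambda u + \tfrac34\lambda^2\sum_t\sigma_t^2)$, and choosing $\lambda = \tfrac{2u}{3\sum_t\sigma_t^2}$ minimizes the exponent to $-u^2/(3\sum_t\sigma_t^2)$. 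Setting this tail probability equal to $\delta$ and solving $u^2 = 3\log(1/\delta)\sum_t\sigma_t^2$ reproduces exactly the stated bound $\sum_t X_t \le \sqrt{3\log(1/\delta)\sum_t\sigma_t^2}$.

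The main obstacle I anticipate is pinning down the constant and the admissible range of $\lambda$ in the first stage. The zero-mean property is essential: without it the cruder bound $\mathbb{E}[\exp(\lambda X_t)] \le e\cdot\exp(\tfrac14\lambda^2\sigma_t^2)$ carries a stray additive $1$ in the exponent that destroys the sub-Gaussian form and hence the clean $\sqrt{3\log(1/\delta)}$ scaling. Moreover the Jensen interpolation requires $\theta \le 1$, i.e. the optimizing $\lambda$ must satisfy $\tfrac34\lambda^2\sigma_t^2 \le 1$ for every $t$; verifying that the relevant deviation regime stays within this range, or otherwise handling the large-$\lambda$ sub-exponential tail separately, is the delicate point. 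Sharpening the elementary inequality $\exp(a)\le a+\exp(\tfrac34 a^2)$ is precisely what fixes the constant $3$ rather than the looser $4$ that the crude argument would give.
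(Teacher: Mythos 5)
This lemma is quoted from \cite{lan2012validation} and the paper supplies no proof of it, so there is no internal argument to compare against; your proposal is essentially the standard proof of the cited result (it is the argument in Lan's validation-analysis paper and in Nemirovski--Juditsky--Lan--Shapiro): derive a conditional sub-Gaussian bound $\mathbb{E}_{t-1}[\exp(\lambda X_t)]\leqslant\exp(\tfrac{3}{4}\lambda^2\sigma_t^2)$, telescope it through the martingale by the tower property, and optimize a Chernoff bound, which gives the exponent $-u^2/(3\sum_t\sigma_t^2)$ and hence exactly the stated constant. The approach is correct; the only place the sketch needs tightening is the range issue you already flag. With your pointwise inequality $e^{a}\leqslant a+e^{\frac{3}{4}a^2}$ the Jensen step requires $\tfrac{3}{4}\lambda^2\sigma_t^2\leqslant 1$, i.e.\ $|\lambda|\sigma_t\leqslant 2/\sqrt{3}\approx 1.155$, whereas the complementary large-$\lambda$ argument (Young's inequality $\lambda X_t\leqslant\tfrac{3}{8}\lambda^2\sigma_t^2+\tfrac{2X_t^2}{3\sigma_t^2}$ followed by Jensen with exponent $2/3$) only delivers $\exp(\tfrac{3}{4}\lambda^2\sigma_t^2)$ when $|\lambda|\sigma_t\geqslant 4/3$, leaving a gap $(2/\sqrt{3},\,4/3)$ uncovered. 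The standard fix is to use the sharper inequality $e^{a}\leqslant a+e^{\frac{9}{16}a^2}$ (valid for all real $a$), under which the Jensen condition becomes $\tfrac{9}{16}\lambda^2\sigma_t^2\leqslant 1$, i.e.\ $|\lambda|\sigma_t\leqslant 4/3$, so the two regimes meet exactly and the MGF bound holds for every $\lambda$; the Chernoff optimization at $\lambda=2u/(3\sum_t\sigma_t^2)$ then goes through unconditionally. With that one adjustment your proof is complete.
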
  
Now, we can proceed to prove the following theorem about high probability bound.

\begin{theorem}\label{hpb_nc}
  When in the same condition of Theorem \ref{NSGD_utility}, by setting $\eta_k := \min\{\frac{1}{\beta},\frac{D_F}{\sigma n}\}$, then with probability at least $1 - \gamma$ (Note this probability is over the noise and the randomness of choosing point in each round), there is 
  \begin{equation*}
    \mathbb{E}\|\nabla F(w_{priv},S)\|^2 \leqslant  O\left(\frac{\sqrt{d\log(1/\gamma)\log(n/\delta)\log(1/\delta)}}{n\varepsilon}\right)
  \end{equation*}
\end{theorem}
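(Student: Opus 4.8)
The plan is to reduce the claim to a high-probability bound on the plain average $\frac1N\sum_{t=0}^{N-1}\|\nabla F(w_t,S)\|^2$ with $N=n^2$: under the prescribed choice the step size $\eta_k\equiv\eta$ is constant, so the weights $2\eta_k-\beta\eta_k^2$ are all equal and $\mathbb{E}_R\|\nabla F(w_R,S)\|^2$ is exactly that average. First I would write the oracle as $G(w_t)=\nabla f(w_t,\xi)+z_t=\nabla F(w_t,S)+\delta_t$, where $\delta_t$ collects the sampling error and the privacy noise, and use that $F(\cdot,S)$ is $\beta$-smooth (being an average of $\beta$-smooth functions) together with the update $w_{t+1}=w_t-\eta G(w_t)$ to obtain the per-step inequality
\[
\eta\,\|\nabla F(w_t,S)\|^2 \;\leq\; F(w_t,S)-F(w_{t+1},S)\;-\;\eta\langle\nabla F(w_t,S),\delta_t\rangle\;+\;\tfrac{\beta\eta^2}{2}\|G(w_t)\|^2 .
\]
Summing over $t$ and telescoping (using $F(w_0,S)-F^*=\tfrac{\beta}{2}D_F^2$) isolates two stochastic contributions: the martingale sum $\sum_t X_t$ with $X_t:=-\eta\langle\nabla F(w_t,S),\delta_t\rangle$, and the second-moment sum $\tfrac{\beta\eta^2}{2}\sum_t\|G(w_t)\|^2$.

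For the second-moment sum I would use $\|\nabla f\|\leq L$ to get $\|G(w_t)\|^2\leq 2L^2+2\|z_t\|^2$; since the $z_t$ are independent Gaussians, $\sum_t\|z_t\|^2$ is a scaled $\chi^2_{Nd}$ variable, and a standard tail bound gives $\sum_t\|z_t\|^2\leq Nds^2+O\!\big(s^2\sqrt{Nd\log(1/\gamma)}\big)$ with probability $1-\gamma/2$ ($s^2$ the per-coordinate noise variance), contributing at leading order a deterministic term of size $\beta\eta^2N\sigma^2$. The martingale term is where the work lies. Conditioned on the history through time $t-1$ we have $\mathbb{E}_{t-1}[\delta_t]=0$, so $\{X_t\}$ is a martingale difference sequence; moreover $\langle\nabla F(w_t,S),\delta_t\rangle$ is sub-Gaussian with parameter proportional to $\|\nabla F(w_t,S)\|\sqrt{L^2+s^2}\leq\|\nabla F(w_t,S)\|\,\sigma$ (the bounded sampling part gives a Hoeffding-type parameter, the Gaussian part a one-dimensional variance $s^2\|\nabla F(w_t,S)\|^2$). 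Hence Lemma \ref{martingale} applies with $\sigma_t^2\propto\eta^2\sigma^2\|\nabla F(w_t,S)\|^2$, giving with probability $1-\gamma/2$
\[
\sum_{t=0}^{N-1} X_t \;\leq\; \sqrt{3\log(2/\gamma)\sum_{t=0}^{N-1}\sigma_t^2}\;\leq\; c\,\eta\sigma\sqrt{\log(2/\gamma)\,A},\qquad A:=\sum_{t=0}^{N-1}\|\nabla F(w_t,S)\|^2 .
\]

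The main obstacle is precisely the self-referential form of this estimate: the concentration parameter of the martingale is governed by the same sum $A$ I am trying to bound. I resolve it by a self-bounding argument. After a union bound over the two events, the summed inequality reads $\eta A\leq \tfrac{\beta}{2}D_F^2 + c\,\eta\sigma\sqrt{\log(2/\gamma)\,A} + C\beta\eta^2N\sigma^2$, a quadratic inequality in $\sqrt{A}$; solving it yields $A\lesssim \tfrac{\beta D_F^2}{\eta}+\beta\eta N\sigma^2+\sigma^2\log(1/\gamma)$. Dividing by $N=n^2$ and substituting $\eta=\min\{1/\beta,\,D_F/(\sigma n)\}$ together with $\sigma=O\!\big(L\sqrt{d\log(n/\delta)\log(1/\delta)}/\varepsilon\big)$, the first two terms balance to the leading order $\beta D_F\sigma/n$ while the martingale remainder $\sigma^2\log(1/\gamma)/n^2$ is genuinely lower order. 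Treating $\beta,L,D_F$ as constants and bounding the leading term by the (never smaller) quantity carrying the extra confidence factor $\sqrt{\log(1/\gamma)}\geq1$, one collects all contributions under a single root to obtain the stated $O\!\big(\sqrt{d\log(1/\gamma)\log(n/\delta)\log(1/\delta)}/(n\varepsilon)\big)$.
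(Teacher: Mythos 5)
Your overall skeleton is the same as the paper's: apply the $\beta$-smoothness descent inequality to the noisy update, telescope over the $n^2$ iterations, control the cross terms involving $\nabla F(w_t,S)$ by the martingale concentration result (Lemma \ref{martingale}), and control $\sum_t\|z_t\|^2$ by a $\chi^2$ tail bound; with constant step size the $\mathbb{P}$-weighted expectation over $R$ is indeed the plain average of $\|\nabla F(w_t,S)\|^2$, as you say.

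The one step that does not go through as written is your application of Lemma \ref{martingale} with $\sigma_t^2\propto\eta^2\sigma^2\|\nabla F(w_t,S)\|^2$. The lemma is stated for \emph{fixed} values $\sigma_t$: its conclusion $\sum_tX_t\leq\sqrt{3\log(1/\delta)\sum_t\sigma_t^2}$ with a deterministic right-hand side comes from a Chernoff argument that pulls the $\sigma_t$ out of the conditional expectations. Taking $\sigma_t$ to be $\mathcal{F}_{t-1}$-measurable (it depends on $w_t$) and then quoting the same inequality with the now-random quantity $\sum_t\sigma_t^2$ on the right is a genuinely stronger, self-normalized statement that the lemma does not provide; it can be proved, but only via an additional peeling or union-bound argument over the possible values of $\sum_t\sigma_t^2$, which you would need to supply. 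Since your self-bounding quadratic in $\sqrt{A}$ rests entirely on that inequality, the argument as written has a gap. The repair is immediate and makes the extra machinery unnecessary: $f$ is $L$-Lipschitz, so $\|\nabla F(w_t,S)\|\leq L$ deterministically, hence $\sigma_t^2\leq c\,\eta^2\sigma^2L^2$ is a constant, Lemma \ref{martingale} applies verbatim, and the martingale contribution is $O\bigl(\eta\sigma L\,n\sqrt{\log(1/\gamma)}\bigr)$, already of the claimed order after dividing by $n^2$ --- no quadratic inequality in $A$ is needed. This is exactly the paper's route: it keeps the sampling error and the Gaussian noise separate, bounds the bounded-increment martingale $X_t$ via $|X_t|\leq 2(\eta_t-\beta\eta_t^2)L^2$, bounds the Gaussian cross term $Y_t$ with a deterministic sub-Gaussian parameter built from $L$ and the noise scale, and adds the $\chi^2$ bound for $\sum_t\|z_t\|^2$. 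Your remaining computations (the $\chi^2$ step, the substitution of $\eta$ and $\sigma$, and absorbing the leading term under the $\sqrt{\log(1/\gamma)}$ factor) are consistent with the paper's.
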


\section{Experimental Results} 
\label{sec:experimental_results}
To show the effectiveness of our algorithm in real world data, we experimentally compare our algorithm with Bassily et al. \cite{bassily2014private} for convex and strongly convex loss function. To be more specific, we consider (regularized) logistic regression on 3 UCI \cite{Lichman:2013} binary classification datasets and (regularized) Huber regression on 2 UCI regression datasets (see Table \ref{dataset_info} for more details\footnote{Note all category variables in these datasets are translated into binary features.}).

\begin{table}[h]
\centering
\begin{adjustbox}{max width = 0.4\textwidth}
  \begin{tabular}{c|c|c|c}
    \hline
            & $n$ & $d$ & type \\
    \hline
      BANK  &  45211 & 42 & classification\\
    \hline
      ADULT &  32561 & 110 & classification\\
    \hline
      CreditCard & 30000 & 34 & classification\\
    \hline
      WINE & 6497 & 12 & regression \\
    \hline
      BIKE & 17379 & 62 & regression \\
    \hline
  \end{tabular}
  \end{adjustbox}
  \vspace{3pt}
\caption{Dataset information}\label{dataset_info}
\end{table}
\begin{table*}[t]
\small
\centering
  \begin{adjustbox}{max width = 0.6\textwidth}
  \begin{tabu}{|c|[1pt]c|[1pt]c|[1pt]c|c|[1pt]c|c| [1pt]}
  \thline
  \multirow{2}{*}{Dataset} & \multirow{2}{*}{$\mu$}  & \multirow{2}{*}{$\varepsilon$} & \multicolumn{2}{c|[1pt]}{Error} & \multicolumn{2}{c|[1pt]}{Runtime(CPU time)} \\ \tabucline{4-7}
                           &                         &                                 & ours,$(\varepsilon,\delta)$ & Bassily,$(\varepsilon,\delta)$ &  ours,$(\varepsilon,\delta)$ & Bassily,$(\varepsilon,\delta)$ \\ \tabucline[1pt]{1-7}

  \multirow{8}{*}{BANK}  & \multirow{4}{*}{0} &  0.1   & $\bm{0.3983}$ & 2.2552  & $\bm{12.613}$ & 518.67 \\ \cline{3-7}
                         &                    &  0.5   & $\bm{0.2231}$ & 1.4585  & $\bm{36.796}$ & 519.33 \\ \cline{3-7}
                         &                    &  1     & $\bm{0.1459}$ & 1.0203  & $\bm{58.305}$ & 519.02 \\ \cline{3-7}
                         &                    &  2     & $\bm{0.0838}$ & 0.7824  & $\bm{92.501}$ & 518.27 \\ \cline{2-7}
                         &\multirow{4}{*}{0.1}&  0.1   & $\bm{0.2566}$ & 0.4829  & $\bm{20.483}$ & 518.03 \\ \cline{3-7}
                         &                    &  0.5   & $\bm{0.0106}$ & 0.4090  & $\bm{40.541}$ & 519.44 \\ \cline{3-7}
                         &                    &  1     & $\bm{0.0025}$ & 0.3387  & $\bm{49.311}$ & 516.73 \\ \cline{3-7}
                         &                    &  2     & $\bm{0.0005}$ & 0.2475  & $\bm{57.947}$ & 520.17 \\ \tabucline[1pt]{1-7}

  \multirow{8}{*}{ADULT} & \multirow{4}{*}{0} &  0.1   & $\bm{0.0499}$ & 0.6229  & $\bm{23.813}$ & 250.50 \\ \cline{3-7}
                         &                    &  0.5   & $\bm{0.0208}$ & 0.6081  & $\bm{69.536}$ & 254.14 \\ \cline{3-7}
                         &                    &  1     & $\bm{0.0122}$ & 0.4781 & $\bm{110.20}$ & 254.18 \\ \cline{3-7}
                         &                    &  2     & $\bm{0.0065}$ & 0.3691   & $\bm{175.01}$ & 253.72 \\ \cline{2-7}
                         &\multirow{4}{*}{0.1}&  0.1   & $\bm{3.2039}$ & 5.2166  & $\bm{112.09}$ & 256.70 \\ \cline{3-7}
                         &                    &  0.5   & $\bm{0.1287}$ & 5.1532  & $\bm{193.98}$ & 255.36 \\ \cline{3-7}
                         &                    &  1     & $\bm{0.0309}$ & 5.1148  & $\bm{229.23}$ & 255.69 \\ \cline{3-7}
                         &                    &  2     & $\bm{0.0080}$ & 5.1009  & 264.23        &$\bm{257.23}$\\ \tabucline[1pt]{1-7}
  \multirow{8}{*}{CreditCard}& \multirow{4}{*}{0} &  0.1   & $\bm{0.0293}$ & 0.4106  & $\bm{4.9595}$ & 190.30 \\ \cline{3-7}
                                                        
                         &                    &  0.5   & $\bm{0.0102}$ & 0.4220  & $\bm{14.591}$ & 190.89 \\ \cline{3-7}
                         &                    &  1     & $\bm{0.0053}$ & 0.3140  & $\bm{22.983}$ & 188.67 \\ \cline{3-7}
                         &                    &  2     & $\bm{0.0024}$ & 0.2708  & $\bm{36.721}$ & 188.86 \\ \cline{2-7}
                         &\multirow{4}{*}{0.1}&  0.1   & $\bm{0.3643}$ & 1.3271  & $\bm{13.664}$ & 190.36 \\ \cline{3-7}
                         &                    &  0.5   & $\bm{0.0141}$ & 1.2973  & $\bm{22.012}$ & 189.97 \\ \cline{3-7}
                         &                    &  1     & $\bm{0.0035}$ & 1.2792  & $\bm{25.743}$ & 188.81 \\ \cline{3-7}
                         &                    &  2     & $\bm{0.0008}$ & 1.2501  & $\bm{29.256}$ & 187.97 \\ \tabucline[1pt]{1-7}

  \multirow{8}{*}{WINE}  & \multirow{4}{*}{0} &  0.1   & $\bm{0.6061}$ & 6.1755  & $\bm{0.1672}$ & 6.3859 \\ \cline{3-7}
                         &                    &  0.5   & $\bm{0.2487}$ & 4.1900  & $\bm{0.4328}$ & 6.3828 \\ \cline{3-7}
                         &                    &  1     & $\bm{0.1713}$ & 3.0972  & $\bm{0.7469}$ & 6.4234 \\ \cline{3-7}
                         &                    &  2     & $\bm{0.1110}$ & 1.3609  & $\bm{1.1719}$ & 6.3016 \\ \cline{2-7}
                         &\multirow{4}{*}{0.5}&  0.1   & $\bm{1.0842}$ & 8.2900  & $\bm{0.0922}$ & 6.4328 \\ \cline{3-7}
                         &                    &  0.5   & $\bm{0.0364}$ & 7.9584  & $\bm{0.1437}$ & 6.3625 \\ \cline{3-7}
                         &                    &  1     & $\bm{0.0101}$ & 6.5471  & $\bm{0.1891}$ & 6.5391 \\ \cline{3-7}
                         &                    &  2     & $\bm{0.0024}$ & 5.3811  & $\bm{0.1812}$ & 6.4484 \\ \tabucline[1pt]{1-7}

  \multirow{8}{*}{BIKE}  & \multirow{4}{*}{0} &  0.1   & $\bm{5.4659}$ & 35.279  & $\bm{0.1531}$ & 6.4953 \\ \cline{3-7}
                         &                    &  0.5   & $\bm{4.0404}$ & 30.822  & $\bm{0.4375}$ & 6.2375 \\ \cline{3-7}
                         &                    &  1     & $\bm{3.2768}$ & 27.196  & $\bm{0.6922}$ & 6.2734 \\ \cline{3-7}
                         &                    &  2     & $\bm{2.4081}$ & 23.865  & $\bm{1.1766}$ & 6.3969 \\ \cline{2-7}
                         &\multirow{4}{*}{0.5}&  0.1   & $\bm{0.0555}$ & 3.0770  & $\bm{0.1031}$ & 6.5766 \\ \cline{3-7}
                         &                    &  0.5   & $\bm{0.0301}$ & 3.0448  & $\bm{0.1578}$ & 6.5094 \\ \cline{3-7}
                         &                    &  1     & $\bm{0.0242}$ & 2.1792  & $\bm{0.1625}$ & 6.4094 \\ \cline{3-7}
                         &                    &  2     & $\bm{0.0232}$ & 1.0406  & $\bm{0.1984}$ & 6.3625 \\ \tabucline[1pt]{1-7}
  \end{tabu}
  \end{adjustbox}
  \vspace{3pt}
  \caption{Summary of experimental results}\label{exp_results}
\end{table*}
The loss function for logistic regression is $f(w,\xi) = \log(1+\exp(1+y\langle w,x\rangle))$. And for Huber regression, the loss function $f(w,\xi;\delta) = h_\delta(\langle w, x\rangle - y)$, where \footnote{For loss functions in above problems, we add an additional square regularization term with parameter $\mu$ to make them strongly convex.} 
\[
h_\delta(u) = 
\begin{cases}
\frac{1}{2}u^2 & for \quad |u|\leq \delta, \\
\delta(|u|-\frac{1}{2}\delta) & otherwise.  
\end{cases}
\]

All parameters are chosen as stated in theorems in both papers, except that we use a mini-batch version of SGD in \cite{bassily2014private} with batch size $m=50$, since their algorithm in its original version requires prohibitive $n^2$ time of iterations for real data, which is too slow to run. This conversion is a natural implication of amplification lemma, which preserves the same order of privacy and affects utility with constant ratio. We evaluate the minimization error $\mathbb{E} F(w_{priv},S) - F(\hat{w},S)$ and running time of these algorithms under different $\varepsilon = \{0.1,0.5,1,2\}$ and $\delta = 0.001$. $\mu/2 |w|^2$ term is added to make the objective mu-strongly convex. We choose $\mu=0$ to show our methods also have better performance in convex setting, and choose $\mu =0.1, 0.5$ to compare under different degree of strong convexity.The experimental results are averaged over 100 independent rounds. Table \ref{exp_results} illustrates the experimental results of both methods.

From Table \ref{exp_results}, we can see our algorithm outperforms existing one on both utility and runtime under almost all settings. 

\section{Conclusion} 
\label{sec:conclusion}
We study differentially private ERM for smooth loss function under (strongly) convex and non-convex situation. Though output perturbation has been well studied before, our results show that adding noise to approximate solutions instead of exact solutions has important implications to both privacy and running time. Our work is inspired by \cite{hardt2015train}, whose technique for stability analysis of SGD can be applied to deterministic gradient descent algorithms. We show that for strongly convex and smooth objectives, our output perturbation gradient descent achieves optimal utility and runs much faster than the existing private SGD in Bassily et al. \cite{bassily2014private}. And for general convex objectives, it is also an efficient practical algorithm due to its fast convergence and reasonable utility. From the experimental results, our algorithm achieves lower optimization error and runtime in almost all cases compared to private SGD. For non-convex objectives, by carefully chosen parameters, we show that a random rounds private SGD can reach a stationary point in expectation. This is first theoretical bound for differentially private non-convex optimization to the best of our knowledge. 




\clearpage
\bibliographystyle{plain}    
\bibliography{DPref.bib}

\setcounter{section}{0}
\appendix
\renewcommand{\appendixname}{Appendix~\Alph{section}}
\section{Appendix} 
\label{sec:appendix}
\subsection{Proof of Theorem \ref{OPFGD_privacy}} 
\label{ssub:proof_of_ref_npfgd_privacy}
\begin{proof}{(Proof of Theorem \ref{OPFGD_privacy})}
  The theorem follows directly by combining Lemma \ref{lemma_con_stab}, Lemma \ref{lemma_str_con_stab}, Lemma \ref{privacy lemma}, so we only need to prove Lemma \ref{lemma_con_stab} and Lemma \ref{lemma_str_con_stab}
\end{proof}

\begin{proof}{(Proof of Lemma \ref{lemma_con_stab})}
Without loss of generality, assume databases $S$ and $S'$ only differ on sample $\xi_n$. let $\Delta_t:=\|w_t-w_t'\|$ and we have that $\Delta_0 = 0$. We use $f_i(\cdot)$ to denote $f(\cdot,\xi_i)$ for simplicity. Using the fact that $\frac{1}{\beta}\|\nabla f(w_t) - \nabla f(w_t')\|^2 \leq \langle w_t-w_t',\nabla f(w_t) - \nabla f(w_t')\rangle$, we have for any $\eta \leq \frac{1}{\beta}$,
\[
\begin{split}
  \Delta_{t+1}^2 &= \|w_{t+1}-w_{t+1}'\|^2 \\ 
               &= \|w_t-\eta\nabla F(w_t,S) - w_t' + \eta\nabla F(w_t',S')\|^2 \\
               &= \|w_t-w_t'\|^2 - 2\eta\langle w_t-w_t',\nabla F(w_t,S) - \nabla F(w_t',S')\rangle + \eta^2 \|\nabla F(w_t,S) - \nabla F(w_t',S')\| \\
               &\leq \|w_t-w_t'\|^2 - 2\eta \langle w_t - w_t', \frac{1}{n}\sum_{i=1}^{n-1} (\nabla f_i(w_t) - \nabla f_i(w_t')) \rangle + 2\eta^2 \|\frac{1}{n}\sum_{i=1}^{n-1}(\nabla f_i(w_t) - \nabla f_i(w_t'))\|^2 \\
               & \qquad - 2\eta \langle w_t - w_t', \frac{1}{n} (\nabla f_n(w_t) - \nabla f_n'(w_t')) \rangle + 2\eta^2 \|\frac{1}{n}(\nabla f_n(w_t) - \nabla f_n'(w_t'))\|^2 \\
               &\leq \|w_t - w_t'\|^2 - \left(\frac{2\eta}{\beta}-2\eta^2\right) \|\frac{1}{n}\sum_{i=1}^{n-1}(\nabla f_i(w_t) - \nabla f_i(w_t'))\|^2\\
               & \qquad - 2\eta \langle w_t - w_t', \frac{1}{n} (\nabla f_n(w_t) - \nabla f_n'(w_t')) \rangle + 2\eta^2 \|\frac{1}{n}(\nabla f_n(w_t) - \nabla f_n'(w_t'))\|^2 \\
               &\leq \Delta_t^2 + \frac{4\eta L}{n}\Delta_t + \frac{8\eta^2 L^2}{n^2}
\end{split}
\]
Due to the fact that $\Delta_0 = 0$, we have $\Delta_t \leq \frac{3Lt\eta}{n}$ for $t=1$. The result now follows from a simple induction argument that suppose $\Delta_t \leq \frac{3Lt\eta}{n}$ for some $t$, then
\[
\begin{split}
  \Delta_{t+1}^2 &\leq \frac{9L^2t^2\eta^2}{n^2} + \frac{12L^2t\eta^2}{n^2} + \frac{8L^2\eta^2}{n^2}   \\
                 &= \frac{L^2\eta^2}{n^2}(9t^2+12t+8) \\
                 &\leq \frac{9L^2\eta^2}{n^2}(t+1)^2
\end{split}
\]
\end{proof}
\begin{proof}{(Proof of Lemma \ref{lemma_str_con_stab})}
Assume databases $S$ and $S'$ only differ on sample $\xi_n$. Using a similar approach, let $\Delta_t:=\|w_t-w_t'\|$ and we have that $\Delta_0 = 0$. We use $f_i(\cdot)$ to denote $f(\cdot,\xi_i)$ for simplicity. Using the fact that $\frac{1}{\beta+\mu}\|\nabla f(w_t) - \nabla f(w_t')\|^2 + \frac{\mu\beta}{\beta+\mu}\|w_t-w_t'\|^2 \leq \langle w_t-w_t',\nabla f(w_t) - \nabla f(w_t')\rangle$,  For any $n\geq 2$, we have for any $\eta \leq \frac{1}{\mu+\beta}$,
\[
\begin{split}
  \Delta_{t+1}^2 &= \|w_{t+1}-w_{t+1}'\|^2 \\ 
               &= \|w_t-\eta\nabla F(w_t,S) - w_t' + \eta\nabla F(w_t',S')\|^2 \\
               &= \|w_t-w_t'\|^2 - 2\eta\langle w_t-w_t',\nabla F(w_t,S) - \nabla F(w_t',S')\rangle + \eta^2 \|\nabla F(w_t,S) - \nabla F(w_t',S')\|^2 \\
               &\leq \|w_t-w_t'\|^2 - 2\eta \langle w_t - w_t', \frac{1}{n}\sum_{i=1}^{n-1} (\nabla f_i(w_t) - \nabla f_i(w_t')) \rangle + 2\eta^2 \|\frac{1}{n}\sum_{i=1}^{n-1}(\nabla f_i(w_t) - \nabla f_i(w_t'))\|^2 \\
               & \qquad - 2\eta \langle w_t - w_t', \frac{1}{n} (\nabla f_n(w_t) - \nabla f_n'(w_t')) \rangle + 2\eta^2 \|\frac{1}{n}(\nabla f_n(w_t) - \nabla f_n'(w_t'))\|^2 \\
               &\leq \left[1- \frac{2(n-1)\eta\mu\beta}{n(\mu+\beta)} \right]\|w_t - w_t'\|^2 - \left(\frac{2\eta}{\mu+\beta}-2\eta^2\right) \|\frac{1}{n}\sum_{i=1}^{n-1}(\nabla f_i(w_t) - \nabla f_i(w_t'))\|^2\\
               & \qquad - 2\eta \langle w_t - w_t', \frac{1}{n} (\nabla f_n(w_t) - \nabla f_n'(w_t')) \rangle + 2\eta^2 \|\frac{1}{n}(\nabla f_n(w_t) - \nabla f_n'(w_t'))\|^2 \\
               &= \left[1- \frac{2(n-1)\eta\mu\beta}{n(\mu+\beta)} \right]\Delta_t^2 + \frac{4\eta L}{n}\Delta_t + \frac{8\eta^2 L^2}{n^2}
\end{split}
\]
 In the above inequality, it is easy to see both $\{w_t\}$ and $\{w_t'\}$ are in the ball $\{w:\|w\| \leqslant 2D\}$, so we can use $L$-Lipschitz property. Due to the fact that $\Delta_0 = 0$, we have $\Delta_t \leq \frac{5L}{n(\mu+\beta)} \leq \frac{5L}{n(\mu+\beta)\frac{\mu\beta}{(\mu+\beta)^2}} = \frac{5L(\mu+\beta)}{n\mu\beta}$ for $t=1$. The result now follows from a simple induction argument that suppose $\Delta_t \leq \frac{5L(\mu+\beta)}{n\mu\beta}$ for some $t$, then
\[
\begin{split}
    \Delta_{t+1}^2 &\leq \left[1- \frac{2(n-1)\eta\mu\beta}{n(\mu+\beta)} \right]\left(\frac{5L(\mu+\beta)}{n\mu\beta}\right)^2 + \frac{20\eta L^2(\mu+\beta)}{n^2\mu\beta} + \frac{8L^2\eta^2}{n^2} \\
                  &\leq \left(\frac{5L(\mu+\beta)}{n\mu\beta}\right)^2 - \frac{\eta\mu\beta}{\mu+\beta}\left(\frac{5L(\mu+\beta)}{n\mu\beta}\right)^2 + \frac{20\eta L^2(\mu+\beta)}{n^2\mu\beta} + \frac{8L^2\eta^2}{n^2} \\
                  &= \left(\frac{5L(\mu+\beta)}{n\mu\beta}\right)^2 - \frac{25\eta L^2(\mu+\beta)}{n^2\mu\beta} + \frac{20\eta L^2(\mu+\beta)}{n^2\mu\beta} + \frac{8\eta^2L^2}{n^2} \\
                  &\leq \left(\frac{5L(\mu+\beta)}{n\mu\beta}\right)^2 + \frac{\eta L^2}{n^2}\left(-\frac{5(\mu+\beta)}{\mu\beta}+8\eta\right) \\
                  &\leq \left(\frac{5L(\mu+\beta)}{n\mu\beta}\right)^2
\end{split}
\]
\end{proof}

\subsection{Proof of Theorem \ref{OPFGD_utility_sc}} 
\label{sub:proof_of_theorem_ref_opfgd_utility_sc}

\begin{lemma}{\citeltex{nesterov2013introductory}}\label{convergence_sc}
Assume that loss function $f(\cdot)$ is $\mu$-strongly convex and $\beta$-smooth. If we run gradient descent (GD) algorithm with constant step size $\eta \leq \frac{2}{\beta+\mu}$ for T steps, then
\[
F(w_T,S) - F(\hat{w},S) \leq \frac{\beta}{2}\exp\left(-\frac{2\eta\mu\beta T}{\mu+\beta}\right)\|w_0-\hat{w}\|^2
\]
\end{lemma}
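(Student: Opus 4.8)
The plan is to establish a geometric contraction of the squared iterate error $\|w_t - \hat{w}\|^2$ and then pass to function values using smoothness. First I would observe that $F(\cdot, S)$, being an average of $\mu$-strongly convex and $\beta$-smooth functions $f(\cdot, \xi_i)$, is itself $\mu$-strongly convex and $\beta$-smooth, and that $\nabla F(\hat{w}, S) = 0$ because $\hat{w}$ minimizes $F(\cdot,S)$. Writing the GD update $w_{t+1} = w_t - \eta \nabla F(w_t, S)$ and expanding the square,
\[
\|w_{t+1} - \hat{w}\|^2 = \|w_t - \hat{w}\|^2 - 2\eta\langle \nabla F(w_t,S), w_t - \hat{w}\rangle + \eta^2\|\nabla F(w_t,S)\|^2.
\]

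Second, the crux is to lower bound the inner product term using exactly the strong-convexity-plus-smoothness inequality already invoked in the proof of Lemma \ref{lemma_str_con_stab}, namely
\[
\langle \nabla F(w_t,S) - \nabla F(\hat{w}, S), w_t - \hat{w}\rangle \geq \frac{\mu\beta}{\mu+\beta}\|w_t-\hat{w}\|^2 + \frac{1}{\mu+\beta}\|\nabla F(w_t,S)-\nabla F(\hat{w},S)\|^2.
\]
Since $\nabla F(\hat{w}, S) = 0$, substituting this into the expansion yields
\[
\|w_{t+1}-\hat{w}\|^2 \leq \Big(1 - \frac{2\eta\mu\beta}{\mu+\beta}\Big)\|w_t - \hat{w}\|^2 - \Big(\frac{2\eta}{\mu+\beta} - \eta^2\Big)\|\nabla F(w_t,S)\|^2.
\]
Here the step-size restriction $\eta \leq \frac{2}{\mu+\beta}$ makes the coefficient $\frac{2\eta}{\mu+\beta} - \eta^2 = \eta(\frac{2}{\mu+\beta} - \eta)$ nonnegative, so that entire gradient-norm term can be discarded, leaving the one-step contraction $\|w_{t+1}-\hat{w}\|^2 \leq (1 - \frac{2\eta\mu\beta}{\mu+\beta})\|w_t-\hat{w}\|^2$.

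Third, a straightforward induction over $t = 0, \dots, T-1$ gives $\|w_T - \hat{w}\|^2 \leq (1 - \frac{2\eta\mu\beta}{\mu+\beta})^T \|w_0 - \hat{w}\|^2$, and applying the elementary bound $1 - x \leq e^{-x}$ converts the geometric factor into $\exp(-\frac{2\eta\mu\beta T}{\mu+\beta})$. Finally I would invoke $\beta$-smoothness at $\hat{w}$ together with $\nabla F(\hat{w}, S) = 0$ to obtain $F(w_T,S) - F(\hat{w}, S) \leq \frac{\beta}{2}\|w_T - \hat{w}\|^2$, and combining this with the iterate bound produces exactly the claimed inequality.

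This is essentially a textbook linear-convergence result (as the citation to \citeltex{nesterov2013introductory} indicates), so there is no genuinely hard step. The only points requiring care are verifying that the condition $\eta \leq \frac{2}{\mu+\beta}$ is precisely what guarantees nonnegativity of the discarded gradient term, and confirming that the passage from iterate distance to function value costs only the benign $\beta/2$ factor already present in the statement.
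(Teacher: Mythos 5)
Your proof is correct and is exactly the standard argument behind the cited result (Nesterov's Theorem 2.1.15, via the co-coercivity inequality of Theorem 2.1.12); the paper itself supplies no proof here, simply citing the textbook, and your derivation matches that source step for step. The only detail worth adding for completeness is that $\eta \leq \frac{2}{\mu+\beta}$ also guarantees $1 - \frac{2\eta\mu\beta}{\mu+\beta} \geq 1 - \frac{4\mu\beta}{(\mu+\beta)^2} \geq 0$ by AM--GM, so the contraction factor is legitimately in $[0,1]$ and the induction and the bound $1-x \leq e^{-x}$ go through.
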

\begin{lemma}{\label{d_gamma_var}}
For a random variable $z\in\mathbb{R}^d$ which satisfies $z \sim \exp(-\frac{\|z\|_2}{\sigma})$, then
\[
\mathbb{E} \|z\|^2 = d(d+1)\sigma^2
\]
\end{lemma}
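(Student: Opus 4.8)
The plan is to exploit the radial symmetry of the density and reduce the entire computation to one-dimensional Gamma integrals. First I would observe that the (unnormalized) density $p(z) \propto \exp(-\|z\|_2/\sigma)$ depends on $z$ only through $r := \|z\|_2$, so passing to spherical coordinates in $\mathbb{R}^d$, where the volume element factors as $dz = r^{d-1}\,dr\,d\Omega$ with $d\Omega$ the surface measure on the unit sphere $S^{d-1}$, cleanly separates the radial and angular parts. Writing $\omega_{d-1} := \mathrm{Area}(S^{d-1})$, both the normalization integral and the numerator of $\mathbb{E}\|z\|^2$ will carry this same factor $\omega_{d-1}$, so it cancels in the ratio and I never need its explicit value.

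Next I would reduce everything to the standard integral $\int_0^\infty r^{k} e^{-r/\sigma}\,dr = \sigma^{k+1}\Gamma(k+1)$, obtained by the substitution $u = r/\sigma$. The normalization constant is
\[
C = \omega_{d-1}\int_0^\infty r^{d-1} e^{-r/\sigma}\,dr = \omega_{d-1}\,\sigma^{d}\,\Gamma(d),
\]
using $k = d-1$. Inserting the extra factor $r^2 = \|z\|^2$ raises the exponent by two, so the numerator uses $k = d+1$:
\[
\int_{\mathbb{R}^d} \|z\|^2\, e^{-\|z\|_2/\sigma}\,dz = \omega_{d-1}\int_0^\infty r^{d+1} e^{-r/\sigma}\,dr = \omega_{d-1}\,\sigma^{d+2}\,\Gamma(d+2).
\]

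Finally I would take the ratio. After cancellation of $\omega_{d-1}$ and the substitution $\Gamma(d+2) = (d+1)\,d\,\Gamma(d)$, this yields
\[
\mathbb{E}\|z\|^2 = \frac{\sigma^{d+2}\Gamma(d+2)}{\sigma^{d}\Gamma(d)} = \sigma^2\,(d+1)\,d = d(d+1)\sigma^2,
\]
which is the claimed identity. The argument is essentially routine; the only step to watch is the spherical-coordinate reduction and the bookkeeping of the exponents $d-1$ versus $d+1$, since an off-by-one there would change the final polynomial in $d$. There is no substantive obstacle beyond keeping that accounting correct.
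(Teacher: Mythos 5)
Your computation is correct: the spherical-coordinate reduction gives normalization $\omega_{d-1}\sigma^{d}\Gamma(d)$ and numerator $\omega_{d-1}\sigma^{d+2}\Gamma(d+2)$, whose ratio is $d(d+1)\sigma^{2}$ as claimed. The paper states this lemma without proof, and your argument is exactly the standard one the authors evidently had in mind, so there is nothing to reconcile.
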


\begin{lemma}{\label{d_gaussian_var}}
For a random variable $z\in\mathbb{R}^d$ which satisfies $z \sim \exp(-\frac{\|z\|^2_2}{2 \sigma^2})$, then
\[
\mathbb{E} \|z\|^2 = d\sigma^2
\]
\end{lemma}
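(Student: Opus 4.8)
The plan is to recognize that the stated density is exactly that of a spherical Gaussian and reduce the whole computation to a coordinate-wise second moment. Since the density is proportional to $\exp\bigl(-\|z\|_2^2/(2\sigma^2)\bigr)$ and the squared norm splits additively as $\|z\|_2^2 = \sum_{i=1}^d z_i^2$, the joint density factorizes as the product $\prod_{i=1}^d \exp\bigl(-z_i^2/(2\sigma^2)\bigr)$. Hence the coordinates $z_1, \dots, z_d$ are mutually independent, each with univariate density proportional to $\exp\bigl(-z_i^2/(2\sigma^2)\bigr)$, i.e. $z_i \sim \mathcal{N}(0,\sigma^2)$. The normalization constant is automatically $(2\pi\sigma^2)^{-d/2}$ by the standard Gaussian integral, so integrability needs no separate check.

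First I would use linearity of expectation to write $\mathbb{E}\|z\|^2 = \mathbb{E}\sum_{i=1}^d z_i^2 = \sum_{i=1}^d \mathbb{E}[z_i^2]$. For each coordinate, since $z_i$ has mean zero, $\mathbb{E}[z_i^2] = \mathrm{Var}(z_i) = \sigma^2$; alternatively one evaluates the one-dimensional second moment directly as the ratio $\int_{\mathbb{R}} t^2 \exp(-t^2/(2\sigma^2))\,dt \big/ \int_{\mathbb{R}} \exp(-t^2/(2\sigma^2))\,dt = \sigma^2$, which follows from integration by parts (or from differentiating the Gaussian normalization with respect to $\sigma$). Summing the $d$ identical contributions gives $\mathbb{E}\|z\|^2 = d\sigma^2$, as claimed. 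The same computation underlies the companion Lemma \ref{d_gamma_var}, the only difference being the heavier-tailed one-dimensional factor there.

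There is essentially no serious obstacle here beyond the elementary Gaussian second-moment integral. The one point worth stating carefully is that the density genuinely factorizes across coordinates, which is immediate from the additive structure of $\|z\|_2^2$; this decomposition is what turns the $d$-dimensional problem into $d$ copies of an identical one-dimensional problem and is precisely why the answer scales linearly in $d$.
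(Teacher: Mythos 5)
Your proof is correct: the density $\propto\exp\bigl(-\|z\|_2^2/(2\sigma^2)\bigr)$ factorizes across coordinates, each $z_i\sim\mathcal{N}(0,\sigma^2)$, and linearity of expectation gives $\mathbb{E}\|z\|^2=d\sigma^2$. The paper itself states this lemma without any proof (treating it as a standard fact), so there is no alternative argument to compare against; yours is the natural one. One caveat on your closing remark: the ``same computation'' does \emph{not} underlie Lemma \ref{d_gamma_var}, because there the density is $\propto\exp(-\|z\|_2/\sigma)$ and $\|z\|_2$, unlike $\|z\|_2^2$, is not additive across coordinates, so that density does not factorize into independent one-dimensional pieces; instead one passes to spherical coordinates, finds that $\|z\|$ has a $\mathrm{Gamma}(d,\sigma)$ law, and obtains $\mathbb{E}\|z\|^2=d(d+1)\sigma^2$ --- quadratic rather than linear in $d$, which is precisely the difference between the two lemmas.
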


\begin{proof}{(Proof of Theorem \ref{OPFGD_utility_sc})}
Combine Lemma \ref{convergence_sc}, \ref{d_gamma_var} and \ref{d_gaussian_var}, recall that $w_{priv} = w_T + z$. By $\beta$-smoothness of $F$.\\
\[
\begin{split}
  \mathbb{E} F(w_{priv},S) - F(\hat{w},S) &\leq \mathbb{E} \left[ F(w_T,S) + \langle \nabla F(w_T,S), z\rangle + \frac{\beta}{2}\|z\|^2 \right] - F(\hat{w},S) \\
                                          &= \left( F(w_T,S) - F(\hat{w},S) \right) + \frac{\beta}{2}\mathbb{E}\|z\|^2    
\end{split}
\]
For $\varepsilon$-differential privacy, by setting $T = \Theta\left(\left[\frac{\mu^2+\beta^2}{\mu\beta}\log(\frac{\mu^2n^2\varepsilon^2D^2}{L^2d^2})\right]\right)$
\[
\begin{split}
\mathbb{E} F(w_{priv},S) - F(\hat{w},S) &\leq \frac{\beta}{2}\exp\left(-\frac{2\mu\beta T}{(\mu+\beta)^2}\right)D^2 + \frac{25L^2(\mu+\beta)^2(d+1)d}{n^2\varepsilon^2\mu^2\beta} \\
                                        &\leq O\left(\frac{\beta L^2d^2}{n^2\varepsilon^2\mu^2}\right)
\end{split}
\]
For $(\varepsilon,\delta)$-differential privacy, by setting $T = \Theta\left(\left[\frac{\mu^2+\beta^2}{\mu\beta}\log(\frac{\mu^2n^2\varepsilon^2D^2}{L^2d\log(1/\delta)})\right]\right)$
\[
\begin{split}
\mathbb{E} F(w_{priv},S) - F(\hat{w},S) &\leq \frac{\beta}{2}\exp\left(-\frac{2\mu\beta T}{(\mu+\beta)^2}\right)D^2 + \frac{50L^2(\mu+\beta)^2(d+1)d}{n^2\varepsilon^2\mu^2\beta} \\
                                        &\leq O\left(\frac{\beta L^2d\log(1/\delta)}{n^2\varepsilon^2\mu^2}\right)
\end{split}
\]

\end{proof}




\subsection{Proof of Theorem \ref{OPFGD_utility}} 
\label{sub:proof_of_theorem_ref_opfgd_utility}

\begin{lemma}{\citeltex{nesterov2013introductory}\label{convergence_convex}}
Assume that loss function $f(\cdot)$ is convex and $\beta$-smooth. If we run gradient descent(GD) algorithm with constant step size $\eta = \frac{1}{\beta}$ for T steps, then
\[
F(w_T,S) - F(\hat{w},S) \leq \frac{2\beta\|w_0 - \hat{w}\|^2}{T}
\]
\end{lemma}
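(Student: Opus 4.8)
The plan is to reproduce the classical $O(1/T)$ convergence analysis for gradient descent on a convex $\beta$-smooth objective. Write $F(\cdot) := F(\cdot, S)$, set $\delta_t := F(w_t) - F(\hat{w})$ and $R := \|w_0 - \hat{w}\|$, and recall that the iteration with $\eta = 1/\beta$ is $w_{t+1} = w_t - \frac{1}{\beta}\nabla F(w_t)$. First I would derive the descent lemma: substituting the step into the smoothness inequality $F(w_{t+1}) \leq F(w_t) + \langle \nabla F(w_t), w_{t+1}-w_t\rangle + \frac{\beta}{2}\|w_{t+1}-w_t\|^2$ yields $\delta_{t+1} \leq \delta_t - \frac{1}{2\beta}\|\nabla F(w_t)\|^2$, so the objective value decreases monotonically along the iterates.

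Second, I would show the iterates never move away from $\hat{w}$. Expanding $\|w_{t+1}-\hat{w}\|^2$ and using the co-coercivity of the gradient of a convex $\beta$-smooth function — the same inequality $\frac{1}{\beta}\|\nabla F(w_t)\|^2 \leq \langle w_t - \hat{w}, \nabla F(w_t)\rangle$ (with $\nabla F(\hat{w}) = 0$) already invoked in the proof of Lemma \ref{lemma_con_stab} — gives $\|w_{t+1}-\hat{w}\|^2 \leq \|w_t-\hat{w}\|^2 - \frac{1}{\beta^2}\|\nabla F(w_t)\|^2 \leq \|w_t-\hat{w}\|^2$. Hence $\|w_t - \hat{w}\| \leq R$ for every $t$.

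Third, I would inject convexity to control $\|\nabla F(w_t)\|$ from below: since $\delta_t \leq \langle \nabla F(w_t), w_t - \hat{w}\rangle \leq \|\nabla F(w_t)\|\,\|w_t - \hat{w}\| \leq R\,\|\nabla F(w_t)\|$, we obtain $\|\nabla F(w_t)\|^2 \geq \delta_t^2 / R^2$. Feeding this into the descent lemma of the first step produces the quadratic recursion $\delta_{t+1} \leq \delta_t - \frac{\delta_t^2}{2\beta R^2}$.

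The key step — and the only one requiring a small trick rather than routine estimation — is turning this recursion into the stated $1/T$ rate. Dividing the recursion by $\delta_t \delta_{t+1}$ and using $\delta_{t+1} \leq \delta_t$ gives $\frac{1}{\delta_{t+1}} \geq \frac{1}{\delta_t} + \frac{1}{2\beta R^2}$; telescoping this from $t = 0$ to $T-1$ yields $\frac{1}{\delta_T} \geq \frac{T}{2\beta R^2}$, which rearranges to $\delta_T \leq \frac{2\beta R^2}{T} = \frac{2\beta\|w_0-\hat{w}\|^2}{T}$, exactly the claim. The one point to handle with care is the division step, which presumes $\delta_t > 0$; should some $\delta_t = 0$ occur, monotonicity of $\delta_t$ makes the bound hold trivially, so this is not a genuine obstacle.
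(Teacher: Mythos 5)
Your proof is correct: the descent inequality, the co-coercivity argument showing $\|w_t-\hat w\|\le\|w_0-\hat w\|$, the lower bound $\|\nabla F(w_t)\|\ge\delta_t/R$ from convexity, and the telescoping of $1/\delta_t$ together give exactly the stated bound $\delta_T\le 2\beta\|w_0-\hat w\|^2/T$, and you correctly handle the degenerate case $\delta_t=0$. The paper does not prove this lemma at all --- it is imported verbatim from Nesterov's book --- and your argument is precisely the standard proof of that cited result, so there is nothing to compare beyond noting that you have supplied the omitted details faithfully.
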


\begin{proof}{(Proof of Theorem \ref{OPFGD_utility})}
Follow the same lines of the proof of Theorem \ref{OPFGD_utility_sc}. Combine Lemma \ref{d_gamma_var}, \ref{d_gaussian_var} and \ref{convergence_convex}, recall that $w_{priv} = w_T + z$. By $\beta$-smoothness of $F$.
\[
\begin{split}
  \mathbb{E} F(w_{priv},S) - F(\hat{w},S) &\leq \mathbb{E} \left[ F(w_T,S) + \langle \nabla F(w_T,S), z\rangle + \frac{\beta}{2}\|z\|^2 \right] - F(\hat{w},S) \\
                                          &= \left(F(w_T,S) - F(\hat{w},S)\right) + \frac{\beta}{2}\mathbb{E}\|z\|^2 
\end{split}
\]
In both cases, the first term is $\frac{2\beta D^2}{T}$, while the second term is $\frac{9T^2L^2d(d+1)}{2\beta n^2\varepsilon^2}$ for $\varepsilon$-DP and changes into $\frac{9T^2L^2d\log(2/\delta)}{\beta n^2\varepsilon^2}$ for $(\varepsilon,\delta)$-DP. Then the theorem holds by setting $T = \Theta(\left[\frac{\beta^2n^2\varepsilon^2D^2}{L^2d^2}\right]^\frac{1}{3})$, $\Theta(\left[\frac{\beta^2n^2\varepsilon^2D^2}{L^2d\log(1/\delta)}\right]^\frac{1}{3})$ respectively.

\end{proof}

\subsection{Proof of Theorem \ref{NSGD_privacy} } 
\label{sub:proof_of_theorem_ref_nsgd_privacy}
Note the fact that $R\leq n^2$, The theorem holds by applying same claims as which used in the Theorem 2.1 of \cite{bassily2014private2}. Here we give the details.
\begin{proof}{(Proof of Theorem \ref{NSGD_privacy})}
  Fix the randomness of $R$ and $\xi_i$, for any $t \leq R$, let $X_t(S) = \nabla f(w_t,\xi) + z_t$ be a random variable whose randomness comes from $z_t$ and conditioned on $w_t$. Let $p_{X_t(S)}(y)$ be the probability measure of $X_t(S)$ induced on $y\in\mathbb{R}^d$. Then for any two neighboring dataset $S$ and $S'$, define the privacy loss random variable \cite{dwork2010boosting} as $C_t = |\log\frac{p_{X_t(S)}(X_t(S))}{p_{X_t(S')}(X_t(S))}|$. By \cite{kifer2012private2}, we have that with probability $1-  \frac{\delta}{2n}$, $C_t \leq \frac{\varepsilon}{2\sqrt{2\log(\frac{2}{\delta})}}$ for all $t\leq R$. Applying Lemma \ref{amplification} with $\alpha = \frac{1}{n}$, we ensure that with probability at least $1 - \frac{\delta}{2n^2}$, $C_t \leq \frac{\varepsilon}{n\sqrt{2\log(\frac{2}{\delta})}}$. The theorem follows from applying Lemma \ref{strong_composition} with $\delta' = \frac{\delta}{2}$ and $T = R \leq n^2$.
\end{proof}
\begin{lemma}{(Amplification \cite{beimel2014bounds})}\label{amplification}
For any dataset $S$ with $|S|=n$, running an $(\varepsilon,\delta)$-differentially private algorithm on uniformly random $\alpha n$ entries of $S$ ensures $(2\alpha\varepsilon,\alpha\delta)$-differential privacy.
\end{lemma}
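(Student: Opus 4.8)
The plan is to establish privacy amplification by subsampling through a coupling that isolates the single event in which the differing data point is actually drawn into the subsample. Write $m=\alpha n$, let $\mathcal{M}$ be the given $(\varepsilon,\delta)$-DP algorithm on size-$m$ databases, and let $\mathcal{A}$ be the composite mechanism that draws a uniformly random index set $I\subseteq[n]$ with $|I|=m$ and outputs $\mathcal{M}(S_I)$, where $S_I:=\{x_i:i\in I\}$. Fix neighboring databases $S,S'$ differing only at index $n$, and fix an arbitrary output set $A\subseteq\mathcal{R}$. First I would condition on $I$ and split on the event $\{n\in I\}$, whose probability is exactly $m/n=\alpha$. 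Introducing $c:=\mathbb{E}_{I:\,n\notin I}\Pr(\mathcal{M}(S_I)\in A)$, $a:=\mathbb{E}_{I:\,n\in I}\Pr(\mathcal{M}(S_I)\in A)$ and $b:=\mathbb{E}_{I:\,n\in I}\Pr(\mathcal{M}(S'_I)\in A)$, the law of total probability gives $\Pr(\mathcal{A}(S)\in A)=(1-\alpha)c+\alpha a$ and $\Pr(\mathcal{A}(S')\in A)=(1-\alpha)c+\alpha b$ with the \emph{same} $c$, because $S_I=S'_I$ whenever $n\notin I$.

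The common mass $(1-\alpha)c$ therefore cancels in any comparison, so the whole discrepancy is carried by the $\alpha$-weighted terms, which already accounts for the factor $\alpha$. To control $a$ against $b$ I would invoke the $(\varepsilon,\delta)$-DP guarantee of $\mathcal{M}$ in two distinct ways. On $\{n\in I\}$ the subsamples $S_I,S'_I$ are neighbors in the size-$m$ universe, so applying DP pointwise and averaging yields $a\le e^{\varepsilon}b+\delta$. The step responsible for the \emph{amplified} exponent is a second, swap-coupling application: for $I\ni n$ and $j\notin I$ the set $I_j:=(I\setminus\{n\})\cup\{j\}$ omits $n$, and $S_I,S_{I_j}$ differ in exactly one entry, so $\Pr(\mathcal{M}(S_I)\in A)\le e^{\varepsilon}\Pr(\mathcal{M}(S_{I_j})\in A)+\delta$. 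Averaging over a uniform $I\ni n$ and a uniform $j\notin I$ turns $I_j$ into a uniform $m$-subset avoiding $n$, which gives $a\le e^{\varepsilon}c+\delta$ and, symmetrically, $b\le e^{\varepsilon}c+\delta$.

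I would then show the deficit $e^{2\alpha\varepsilon}\Pr(\mathcal{A}(S')\in A)+\alpha\delta-\Pr(\mathcal{A}(S)\in A)$ is nonnegative. Substituting the two decompositions and the bound $a\le e^{\varepsilon}b+\delta$ cancels the stray $\alpha\delta$ exactly and reduces the claim to $(1-\alpha)\,c\,(e^{2\alpha\varepsilon}-1)\ge \alpha\,(e^{\varepsilon}-e^{2\alpha\varepsilon})\,b$. Feeding in the swap bound $b\le e^{\varepsilon}c+\delta$ to relate $c$ and $b$, this closes via the elementary inequalities $e^{\varepsilon}-1\le 2\varepsilon$ and $1+2\alpha\varepsilon\le e^{2\alpha\varepsilon}$ (both valid for $\varepsilon\le 1$), while in the degenerate regime $c\to 0$ the same swap bound forces $a,b\le\delta$, so that $\Pr(\mathcal{A}(S)\in A)=\alpha a\le\alpha\delta$ and the inequality holds trivially. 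The reverse direction, interchanging $S$ and $S'$, is identical, yielding $(2\alpha\varepsilon,\alpha\delta)$-DP.

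The hard part is precisely this final combination. Obtaining the amplified constant $2\alpha\varepsilon$ rather than the trivial $\varepsilon$ is exactly what forces the extra swap-coupling invocation of DP to bound the risky conditional mass by the common mass; the subtlety is that the risky-to-risky bound $a\le e^{\varepsilon}b+\delta$ alone degrades to $(\varepsilon,\alpha\delta)$ when $c$ is small, and only the swap bound rescues that regime. One must also verify that chaining two DP applications together with the averaging leaves the additive term at $\alpha\delta$ rather than an inflated $\alpha e^{\varepsilon}\delta$, which is what pins down the regime $\varepsilon\le 1$ under which the stated constants are valid.
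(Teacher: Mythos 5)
The paper itself gives no proof of this lemma; it is imported verbatim from Beimel et al.\ \cite{beimel2014bounds}, so your attempt can only be judged against the standard subsampling argument. Your skeleton \emph{is} that standard argument, and it is correct as far as it goes: the decomposition $\Pr(\mathcal{A}(S)\in A)=(1-\alpha)c+\alpha a$ and $\Pr(\mathcal{A}(S')\in A)=(1-\alpha)c+\alpha b$ with a common $c$, the risky-to-risky bound $a\le e^{\varepsilon}b+\delta$, and the swap bound $a\le e^{\varepsilon}c+\delta$ (symmetrically for $b$) are exactly the right ingredients.

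The gap is in how you combine them. After substituting $a\le e^{\varepsilon}b+\delta$, your reduction to $(1-\alpha)c(e^{2\alpha\varepsilon}-1)\ge\alpha(e^{\varepsilon}-e^{2\alpha\varepsilon})b$ is algebraically correct, but that inequality is false for admissible $(b,c)$: take $\delta=0$, $\varepsilon=1$, $\alpha$ small, and $b=e^{\varepsilon}c$, which saturates (hence satisfies) the swap bound. Then the left side is roughly $2\alpha c$ while the right side is roughly $\alpha e(e-1)c\approx 4.7\,\alpha c$. Feeding $b\le e^{\varepsilon}c+\delta$ into the right-hand side cannot rescue this: it only makes the right side larger, its $\delta$ term has nothing left to absorb it (the $\alpha\delta$ budget was already spent cancelling the $\delta$ from the first substitution), and your two-case split (generic case versus $c\to 0$) misses precisely this intermediate regime where $c>0$ but $b$ is near $e^{\varepsilon}c$. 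The underlying reason is that chaining the two DP bounds sequentially costs a factor $e^{2\varepsilon}$, which is too lossy. The repair is to use them jointly rather than sequentially: $a\le e^{\varepsilon}\min\{b,c\}+\delta$, combined with the convexity observation $\min\{b,c\}\le\alpha b+(1-\alpha)c$, gives
\[
(1-\alpha)c+\alpha a \;\le\; (1-\alpha)c+\alpha b+\alpha(e^{\varepsilon}-1)\bigl[\alpha b+(1-\alpha)c\bigr]+\alpha\delta \;=\; \bigl[1+\alpha(e^{\varepsilon}-1)\bigr]\Pr(\mathcal{A}(S')\in A)+\alpha\delta,
\]
after which $1+\alpha(e^{\varepsilon}-1)\le 1+2\alpha\varepsilon\le e^{2\alpha\varepsilon}$ (valid for $\varepsilon\le 1$) finishes the proof with the additive term staying at $\alpha\delta$, as required. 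With that one-step modification your argument becomes the standard, correct proof.
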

\begin{lemma}{(Strong composition \cite{dwork2010boosting})}\label{strong_composition}
  For any $\varepsilon>0$, $\delta\geq 0$, $\delta'>0$, an $(\varepsilon,\delta)$-differentially private algorithm preserves $(\varepsilon',T\delta + \delta')$-differential privacy under $T$-fold adaptive composition with $\varepsilon' = \sqrt{2T\log(1/\delta')\varepsilon} + T\varepsilon(e^{\varepsilon}-1)$.
\end{lemma}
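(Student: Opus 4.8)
The plan is to prove this via the standard privacy-loss-concentration argument (the approach of Dwork, Rothblum and Vadhan). Fix a pair of neighboring databases $S, S'$ and let $\mathcal{M}_1, \dots, \mathcal{M}_T$ be the adaptively chosen $(\varepsilon,\delta)$-private mechanisms, where $\mathcal{M}_i$ may depend on the outputs $y_1,\dots,y_{i-1}$ of earlier rounds. For an output sequence $y_{1:T}$ drawn from the composed mechanism on $S$, define the privacy loss random variable
\[
  L(y_{1:T}) = \log \frac{\Pr[\mathcal{M}(S) = y_{1:T}]}{\Pr[\mathcal{M}(S') = y_{1:T}]} = \sum_{i=1}^T L_i, \qquad L_i = \log \frac{\Pr[\mathcal{M}_i = y_i \mid y_{1:i-1}, S]}{\Pr[\mathcal{M}_i = y_i \mid y_{1:i-1}, S']}.
\]
Recall that to establish $(\varepsilon',\delta'')$-DP it suffices to show $\Pr_{y_{1:T}\sim\mathcal{M}(S)}[L(y_{1:T}) > \varepsilon'] \le \delta''$ for every neighboring pair. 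Hence the goal reduces to bounding $\Pr\big[\sum_i L_i > \varepsilon'\big]$ by $T\delta + \delta'$.

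Next I would reduce the per-round $(\varepsilon,\delta)$ guarantee to a pure-DP guarantee up to a small-probability correction. A standard lemma shows that any $(\varepsilon,\delta)$-DP mechanism is, on each of $S$ and $S'$, within total variation $\delta$ of a mechanism whose privacy loss is bounded by $\varepsilon$ in absolute value. Absorbing these $T$ exceptional events (one per round) into a failure set of probability at most $T\delta$, I may work on the complementary event where each $|L_i| \leq \varepsilon$. On this event two facts drive the bound: the range bound $|L_i| \leq \varepsilon$, and the conditional expectation bound $\mathbb{E}[L_i \mid y_{1:i-1}] \leq \varepsilon(e^\varepsilon - 1)$, which follows from the elementary inequality relating the KL divergence of $\varepsilon$-indistinguishable distributions to $\varepsilon(e^\varepsilon-1)$.

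With these in hand, observe that $X_i = L_i - \mathbb{E}[L_i \mid y_{1:i-1}]$ is a bounded martingale difference sequence adapted to the filtration generated by $y_{1:i}$, and summing conditional means gives $\sum_i \mathbb{E}[L_i \mid y_{1:i-1}] \leq T\varepsilon(e^\varepsilon-1)$, which produces the second term of $\varepsilon'$. Applying Azuma--Hoeffding (equivalently, a Chernoff bound on the moment generating function of $\sum_i X_i$) with the range bound yields
\[
  \Pr\!\left[\sum_{i=1}^T X_i > \sqrt{2T\log(1/\delta')}\,\varepsilon\right] \leq \delta',
\]
so with probability at least $1-\delta'$ on the good event, $\sum_i L_i \leq T\varepsilon(e^\varepsilon-1) + \sqrt{2T\log(1/\delta')}\,\varepsilon = \varepsilon'$. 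Combining the $\delta'$ concentration failure with the $T\delta$ reduction failure gives $\Pr[L > \varepsilon'] \leq T\delta + \delta'$, which is exactly the claimed $(\varepsilon', T\delta+\delta')$-DP.

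The main obstacle is the coupling underlying the expectation bound $\mathbb{E}[L_i] \leq \varepsilon(e^\varepsilon-1)$ together with the clean passage from $(\varepsilon,\delta)$-DP to bounded privacy loss: naively one only knows $|L_i|\le\varepsilon$ on a $(1-\delta)$-probability set, so making the martingale step rigorous requires coupling each true $(\varepsilon,\delta)$-DP mechanism to a pure-DP surrogate in such a way that the range and conditional-expectation bounds hold simultaneously on a single high-probability event across all $T$ rounds. Once that coupling is set up, the remainder is a routine Azuma--Hoeffding computation with the deviation tuned to yield the stated $\sqrt{2T\log(1/\delta')}\,\varepsilon$ term.
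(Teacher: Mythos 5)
The paper gives no proof of this lemma at all: it is imported verbatim, as a citation, from Dwork, Rothblum and Vadhan's boosting paper, and your sketch is a faithful reconstruction of precisely that standard argument --- per-round privacy-loss random variables, total-variation coupling of each $(\varepsilon,\delta)$-DP round to a pure-DP surrogate at total cost $T\delta$, the conditional-expectation bound $\mathbb{E}[L_i \mid y_{1:i-1}] \le \varepsilon(e^\varepsilon-1)$, and Azuma--Hoeffding supplying the $\sqrt{2T\log(1/\delta')}\,\varepsilon$ deviation --- so your proposal is correct and takes the same route as the source the paper relies on. One small remark: your derivation correctly produces $\varepsilon$ \emph{outside} the square root, whereas the lemma as typeset in the paper, $\varepsilon' = \sqrt{2T\log(1/\delta')\varepsilon} + T\varepsilon(e^{\varepsilon}-1)$, places $\varepsilon$ inside the radical; that is a typo in the statement, not a flaw in your argument.
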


\subsection{Proof of Theorem \ref{NSGD_utility}} 
\label{sub:proof_of_theorem_ref_nsgd_utility}
\begin{proof}{(Proof of Theorem \ref{NSGD_utility})}
  Let $G(w_t) = \nabla f(w_t,\xi) + z_t$. Note that over the randomness of $\xi$ and $z_t$, we have $\mathbb{E}G(w_t) = \nabla F(w_t,S)$ and $\mathbb{E}\|G(w_t)-\nabla F(w_t,S)\|^2 \leq 4L^2 + \frac{8L^2\log(3n/\delta)\log(2/\delta)}{\varepsilon^2}$. Thus the theorem holds immediately after applying Lemma \ref{nonconvex_convergence} and Lemma \ref{nonconvex_convergence_corollary} with $T = n^2$.
\end{proof}
\begin{lemma}{(Theorem 2.1 of \cite{ghadimi2013stochastic2})}\label{nonconvex_convergence}
Let $\{\eta_t\}$ be a set of stepsizes which satisfies $\eta_t < \frac{2}{\beta}$ and $T\in \mathbb{N}$. Let $R\in[T]$ be a random variable and 
  \[
  \mathbb{P}(k) := Pr(R=k) = \frac{2\eta_k-L\eta_k^2}{\sum_{k=0}^{T-1} 2\eta_k-\beta\eta_k^2}.
  \]
consider a $R$-round SGD $w_{t+1} = w_t - \eta_tG(w_t)$, where $G(w_t)$ is a stochastic gradient return by some stochastic first order oracle which satisfies $\mathbb{E} G(w_t) = \nabla F(w_t,S)$ and $\mathbb{E} \|G(w_t) - F(w_t,S)\|^2 \leq \sigma^2$.
\begin{enumerate}
  \item for any $T\geq 1$, the following holds
  \[
  \mathbb{E} \|\nabla F(w_R,S)\|^2 \leq \frac{D_f^2 + \sigma^2\sum_{t=1}^T\eta_t^2}{\sum_{t=1}^T(2\eta_t - \beta\eta_t^2)},
  \]
  where $D_f:=\sqrt{\frac{2(F(w_0,S)-F^*)}{\beta}}$ and $F^*$ is the global minimum of $F$.
  \item In addition, if $f(\cdot)$ is convex, then the following holds 
  \[
  \mathbb{E} F(w_R,S) - F(\hat{w},S) \leq \frac{\|w_0-\hat{w}\|^2 + \sigma^2\sum_{t=1}^T\eta^2}{\sum_{t=1}^T(2\eta_t-\beta\eta_t^2)}
  \]
\end{enumerate}
where the expectation is taken with respect to $R$ and the randomness of $G$.
\end{lemma}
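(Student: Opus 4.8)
The plan is to carry out the standard randomized-stochastic-gradient convergence analysis, treating $G(w_t)$ as a stochastic first-order oracle for $F(\cdot,S)$. The only facts about the oracle that enter are the two hypotheses $\mathbb{E}[G(w_t)\mid w_t]=\nabla F(w_t,S)$ and $\mathbb{E}[\|G(w_t)-\nabla F(w_t,S)\|^2\mid w_t]\le\sigma^2$, together with the observation that $F(\cdot,S)$, being an average of $\beta$-smooth functions, is itself $\beta$-smooth. The whole argument is a descent-lemma-plus-telescoping computation whose output is a bound on a $\mathbb{P}$-weighted average of per-iterate gradient norms; the randomized index $R$ then converts this average into a guarantee for the single returned iterate $w_R$.

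For part (1) I would begin with the smoothness descent inequality applied to $w_{t+1}=w_t-\eta_t G(w_t)$,
\begin{align*}
F(w_{t+1},S)\le F(w_t,S)-\eta_t\langle\nabla F(w_t,S),G(w_t)\rangle+\tfrac{\beta\eta_t^2}{2}\|G(w_t)\|^2 ,
\end{align*}
take the expectation conditional on $w_t$, and use unbiasedness together with $\mathbb{E}[\|G(w_t)\|^2\mid w_t]\le\|\nabla F(w_t,S)\|^2+\sigma^2$ to obtain, after rearranging,
\begin{align*}
\Bigl(\eta_t-\tfrac{\beta\eta_t^2}{2}\Bigr)\mathbb{E}\|\nabla F(w_t,S)\|^2\le \mathbb{E}F(w_t,S)-\mathbb{E}F(w_{t+1},S)+\tfrac{\beta\eta_t^2\sigma^2}{2}.
\end{align*}
Here the hypothesis $\eta_t<2/\beta$ is exactly what keeps the coefficient $\eta_t-\tfrac{\beta\eta_t^2}{2}$ positive. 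Summing over $t=0,\dots,T-1$ telescopes the function values, and using $\mathbb{E}F(w_T,S)\ge F^*$ gives $\sum_t(2\eta_t-\beta\eta_t^2)\mathbb{E}\|\nabla F(w_t,S)\|^2\le \beta D_f^2+\beta\sigma^2\sum_t\eta_t^2$, with $\beta D_f^2=2(F(w_0,S)-F^*)$. Since $R$ is drawn with probability proportional to $2\eta_k-\beta\eta_k^2$, the left-hand side equals $\bigl(\sum_t(2\eta_t-\beta\eta_t^2)\bigr)\,\mathbb{E}\|\nabla F(w_R,S)\|^2$, so dividing through yields the stated bound.

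For part (2), with $f$ convex I would replace the descent lemma by the distance recursion $\|w_{t+1}-\hat w\|^2=\|w_t-\hat w\|^2-2\eta_t\langle G(w_t),w_t-\hat w\rangle+\eta_t^2\|G(w_t)\|^2$. Conditioning on $w_t$, I would use unbiasedness, then convexity to get $\langle\nabla F(w_t,S),w_t-\hat w\rangle\ge F(w_t,S)-F(\hat w,S)$, and the self-bounding inequality $\|\nabla F(w_t,S)\|^2\le 2\beta\bigl(F(w_t,S)-F(\hat w,S)\bigr)$ valid for convex $\beta$-smooth functions to control $\mathbb{E}\|G(w_t)\|^2$. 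This isolates a positive multiple of the suboptimality gap $F(w_t,S)-F(\hat w,S)$; telescoping the distances $\|w_t-\hat w\|^2$ and applying the same $\mathbb{P}$-weighting then gives the claimed bound on $\mathbb{E}F(w_R,S)-F(\hat w,S)$, up to the constants carried by the weights.

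The main obstacle is conceptual rather than computational: in the nonconvex setting the iterates need not converge and the last iterate carries no guarantee, so one cannot bound $w_T$ directly. The device that makes the theorem work is the randomized output $w_R$ with sampling weights proportional to $2\eta_k-\beta\eta_k^2$, which are precisely the coefficients produced by the telescoped descent inequality; this is what turns a statement about the average progress over all $T$ steps into a bound for a single point. Beyond this, the only points needing care are ensuring each weight $2\eta_k-\beta\eta_k^2$ stays positive (guaranteed by $\eta_k<2/\beta$) and, in the convex case, invoking the self-bounding smoothness inequality so that the $\|\nabla F\|^2$ term is absorbed into the suboptimality gap.
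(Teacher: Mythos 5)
The paper never proves this lemma; it is imported from Ghadimi and Lan, so you are supplying an argument where the paper gives none. Your part (1) is correct and is the standard route: descent lemma, conditional expectation via unbiasedness and the variance bound, telescoping, and the observation that the sampling weights of $R$ are exactly the telescoped coefficients $2\eta_t-\beta\eta_t^2$. Note that your computation actually yields
\[
\mathbb{E}\,\|\nabla F(w_R,S)\|^2 \;\le\; \frac{\beta\bigl(D_f^2+\sigma^2\sum_t\eta_t^2\bigr)}{\sum_t(2\eta_t-\beta\eta_t^2)},
\]
i.e.\ with a factor $\beta$ in the numerator; this matches Theorem \ref{NSGD_utility} and the original source, and the lemma as transcribed here has simply dropped that factor, so your version is the right one.

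Part (2) as you describe it has a genuine gap. Lower-bounding $\langle\nabla F(w_t,S),w_t-\hat w\rangle$ by the bare convexity gap $F(w_t,S)-F(\hat w,S)$ and separately absorbing $\mathbb{E}\|G(w_t)\|^2\le\|\nabla F(w_t,S)\|^2+\sigma^2$ via the self-bounding inequality $\|\nabla F\|^2\le 2\beta(F-F(\hat w))$ produces the recursion coefficient $2\eta_t-2\beta\eta_t^2$, not $2\eta_t-\beta\eta_t^2$. That coefficient is nonpositive for every $\eta_t\ge 1/\beta$, so the telescoped inequality becomes vacuous precisely in part of the stepsize regime the lemma permits ($\eta_t<2/\beta$) and at the stepsize the paper actually uses ($\eta_k=\min\{1/\beta,\cdot\}$, where $2\eta_k-2\beta\eta_k^2=0$); even when it is positive it does not match the sampling weights $2\eta_k-\beta\eta_k^2$, so the final weighted-average step does not go through --- ``up to the constants carried by the weights'' is exactly where the proof breaks, since a bound on $\sum_t a_t c_t$ with $a_t\le b_t$ gives no control of $\sum_t b_t c_t/\sum_t b_t$. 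The fix is to use the stronger first-order lower bound available for convex $\beta$-smooth $F$ at its minimizer,
\[
\langle\nabla F(w_t,S),\,w_t-\hat w\rangle \;\ge\; F(w_t,S)-F(\hat w,S)+\tfrac{1}{2\beta}\|\nabla F(w_t,S)\|^2,
\]
whose extra term contributes $-\tfrac{\eta_t}{\beta}\|\nabla F(w_t,S)\|^2$ to the recursion; this cancels the $+\eta_t^2\|\nabla F(w_t,S)\|^2$ from the quadratic term when $\eta_t\le 1/\beta$ (and the residual for $1/\beta<\eta_t<2/\beta$ is handled by the self-bounding inequality), yielding a coefficient of at least $2\eta_t-\beta\eta_t^2$ on the suboptimality gap, after which your weighting argument applies verbatim.
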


\begin{lemma}{(Corollary 2.2 of \citeltex{ghadimi2013stochastic2})}\label{nonconvex_convergence_corollary}
Following the Lemma \ref{nonconvex_convergence}, If the stepsizes are set to $\eta_t :=\min \left\{ \frac{1}{\beta},\frac{D_f}{\sigma\sqrt{T}}\right\}, t=1,\dots,T$. then,
\[
  \mathbb{E} \|\nabla F(w_R,S)\|^2 \leq \frac{\beta D_f^2}{T} + \frac{2D_f\sigma}{\sqrt{T}}.
\]
If $f(\cdot)$ is convex, then we have
\[
  \mathbb{E} F(w_R,S) - F(\hat{w},S) \leq \frac{\beta \|w_0-\hat{w}\|^2}{T} + \frac{2\|w_0-\hat{w}\|\sigma}{\sqrt{T}}.
\]
\end{lemma}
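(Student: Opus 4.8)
The plan is to derive this corollary directly from the general bound of Lemma \ref{nonconvex_convergence} by substituting the constant stepsize $\eta := \min\{\frac{1}{\beta}, \frac{D_f}{\sigma\sqrt{T}}\}$ and simplifying. Since every $\eta_t$ equals the same constant $\eta$, the numerator $D_f^2 + \sigma^2\sum_{t=1}^T\eta_t^2$ collapses to $D_f^2 + T\sigma^2\eta^2$, while the denominator $\sum_{t=1}^T(2\eta_t - \beta\eta_t^2)$ becomes $T\eta(2-\beta\eta)$. This reduces the whole problem to estimating a single ratio.

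The first substantive step is to lower-bound the denominator. Because $\eta \leq \frac{1}{\beta}$ by construction, we have $\beta\eta \leq 1$ and therefore $2\eta - \beta\eta^2 = \eta(2-\beta\eta) \geq \eta$. Plugging this into Lemma \ref{nonconvex_convergence} yields the clean intermediate estimate
\[
\mathbb{E}\|\nabla F(w_R,S)\|^2 \leq \frac{D_f^2 + T\sigma^2\eta^2}{T\eta} = \frac{D_f^2}{T\eta} + \sigma^2\eta.
\]

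Next I would split on which of the two arguments realizes the minimum in the definition of $\eta$. In the regime $\eta = \frac{1}{\beta}$ (equivalently $\sigma\sqrt{T} \leq \beta D_f$), the two terms evaluate to $\frac{\beta D_f^2}{T}$ and $\frac{\sigma^2}{\beta}$; using the defining inequality $\sigma \leq \frac{\beta D_f}{\sqrt{T}}$ of this regime to bound the second term as $\frac{\sigma^2}{\beta} \leq \frac{\sigma D_f}{\sqrt{T}} \leq \frac{2\sigma D_f}{\sqrt{T}}$ delivers the claimed bound. In the complementary regime $\eta = \frac{D_f}{\sigma\sqrt{T}}$, a direct computation gives $\frac{D_f^2}{T\eta} = \sigma^2\eta = \frac{D_f\sigma}{\sqrt{T}}$, so the two terms sum to exactly $\frac{2D_f\sigma}{\sqrt{T}}$; adjoining the nonnegative quantity $\frac{\beta D_f^2}{T}$ then recovers the stated form. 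In both cases one obtains $\mathbb{E}\|\nabla F(w_R,S)\|^2 \leq \frac{\beta D_f^2}{T} + \frac{2D_f\sigma}{\sqrt{T}}$.

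The convex case is entirely parallel: I would instead invoke the second inequality of Lemma \ref{nonconvex_convergence}, which has identical structure with $D_f^2$ replaced by $\|w_0-\hat{w}\|^2$, and repeat the same simplification and case analysis with $D_f$ replaced throughout by $\|w_0-\hat{w}\|$. There is no genuine obstacle in this argument — it is a mechanical specialization of the general bound — so the only point requiring care is the case split, specifically checking that the extraneous $\frac{\sigma^2}{\beta}$ term arising in the $\eta=\frac{1}{\beta}$ regime is absorbed into $\frac{2D_f\sigma}{\sqrt{T}}$ through the inequality that defines that regime.
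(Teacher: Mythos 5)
Your derivation of the first (gradient-norm) bound is correct and complete; note, though, that the paper does not prove this lemma at all---it imports it verbatim as Corollary 2.2 of Ghadimi and Lan---so there is no internal proof to compare against, and your self-contained derivation of the nonconvex half already does more than the paper does. The genuine problem is in the convex half. You say the convex case is ``entirely parallel\ldots with $D_f$ replaced throughout by $\|w_0-\hat{w}\|$,'' but the stepsize in the statement is still $\eta = \min\{\frac{1}{\beta}, \frac{D_f}{\sigma\sqrt{T}}\}$: you may replace $D_f$ by $D_X := \|w_0-\hat{w}\|$ in the numerator of the lemma's convex bound, but not inside $\eta$. Carrying out the substitution honestly gives $\frac{D_X^2}{T\eta} + \sigma^2\eta$, and in the regime $\eta = \frac{D_f}{\sigma\sqrt{T}}$ the first term is $\frac{D_X^2\sigma}{D_f\sqrt{T}}$. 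Smoothness only gives $D_f \le D_X$ (so by AM--GM, $\frac{D_X^2}{D_f} + D_f \ge 2D_X$, with equality iff $D_f = D_X$), and when $D_f \ll D_X$---e.g.\ a nearly flat convex objective where $w_0$ is close to optimal in function value but far from $\hat{w}$ in distance---the term $\frac{D_X^2\sigma}{D_f\sqrt{T}}$ exceeds the claimed $\frac{2D_X\sigma}{\sqrt{T}}$ by an arbitrarily large factor. Concretely, with $\beta=\sigma=1$, $T=100$, $D_f=0.01$, $D_X=1$, your intermediate bound evaluates to about $10$, while the claimed bound is $0.21$.

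So the convex display cannot be obtained by mechanical specialization of the lemma under this stepsize; what that specialization actually yields is $\frac{\beta D_X^2}{T} + \bigl(D_f + \frac{D_X^2}{D_f}\bigr)\frac{\sigma}{\sqrt{T}}$, which is precisely the form in Ghadimi and Lan's original corollary (their stepsize uses a free parameter $\tilde{D}$, optimized as $\tilde{D}=D_f$ for the gradient bound but $\tilde{D}=D_X$ for the convex bound). To prove the convex statement as written you would either have to change the stepsize to $\min\{\frac{1}{\beta}, \frac{D_X}{\sigma\sqrt{T}}\}$---which is what the clean $\frac{2D_X\sigma}{\sqrt{T}}$ term tacitly assumes---or give a genuinely different argument; the paper's statement inherits this mismatch from a loose transcription of the original corollary, but your claim that the whole thing is ``a mechanical specialization'' with ``no genuine obstacle'' is exactly where the gap sits. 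A minor additional point: even your Case 1 of the convex part needs $D_f \le 2D_X$, which requires the smoothness inequality $D_f \le D_X$ at the minimizer, a fact your write-up never states or proves.
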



\subsection{Discussion of high probability bounds} 

\begin{proof}{(Proof of Theorem \ref{hpb_nc})}
  Let $\delta_t := \nabla f(w_t,\xi) - \nabla F(w_t,S)$, and denote $F(w) = F(w, S)$ for simplicity. Note $\|\delta_t\| \leqslant 2L$ because of Lipschitz condition. Then according to the definition of $\beta$-smooth and iteration form, there is 
  \begin{align*}
    F(w_{t+1}) = & F(w_t - \eta_t ( \nabla f(w_t,\xi) + z_t )) \\
     \leqslant & F(w_t) - \eta_t \nabla F(w_t)^T (\delta_t + \nabla F(w_t) + z_t) + \frac{\beta}{2} \eta_t^2 \|\delta_t + \nabla F(w_t) + z_t\|^2 \\
     = & F(w_t) - (\eta_t - \frac{\beta}{2} \eta_t^2) \| \nabla F(w_t) \| ^2 - (\eta_t - \beta \eta_t^2) \nabla F(w_t)^T (\delta_t + z_t) + \\
     & \beta \eta_t^2 \delta_t^T z_t + \frac{\beta}{2} \eta_t^2 (\| \delta_t \|^2 + \| z_t \|^2)
  \end{align*}
  Set random variable $X_t := (\beta \eta_t^2 - \eta_t ) \nabla F(w_t)^T \delta_t, Y_t :=  [(\beta \eta_t^2 - \eta_t ) \nabla F(w_t)+ \beta \eta_t^2 \delta_t]^T z_t, R_t := \frac{\beta}{2} \eta_t^2 \| z_t \|^2$. Sum above inequality from $t = 1$ to $n^2$ and rearrange these terms, we obtain:
  \begin{align}
    \sum_{t=1}^{n^2} (\eta_t - \frac{\beta}{2} \eta_t^2) \| \nabla F(w_t) \| ^2 &\leqslant F(w_1) - F(w_{n^2+1}) + \sum_{t=1}^{n^2} (X_t + Y_t + R_t)  + \frac{\beta}{2} \sum_{t=1}^{n^2} \eta_t^2 \| \delta_t \|^2 \nonumber \\
    & \leqslant F(w_1) - F(\hat{w}) + \sum_{t=1}^{n^2} (X_t + Y_t + R_t) + \frac{\beta}{2} \sum_{t=1}^{n^2} \eta_t^2 \| \delta_t \|^2 \label{ineq1}
  \end{align}
  For last term in above inequality, we can bound it by Lipschitz condition:
  \begin{align}
    \frac{\beta}{2} \sum_{t=1}^{n^2}  \eta_t^2 \| \delta_t \| ^2 &\leqslant 2\beta \eta_1^2 n^2 L^2\label{ineq5}
  \end{align}
  Given all the randomness till time $t-1$, we have $\mathbb{E}_{t-1} X_t = 0$, so $X_1, \dots, X_{n^2}$ is a martingale difference, and $\|X_t\|^2 \leqslant 4(\eta_t - \beta \eta_t^2)^2 L^4$. According to martingale inequality Lemma \ref{martingale}, we have with probability at most $\frac{\gamma}{3}$
  \begin{align}
    \sum_{t=1}^{n^2} X_t \geqslant (\eta_1 - \beta \eta_1^2) nL^2 \sqrt{12 \log \frac{3}{\gamma}} \label{ineq2}
  \end{align}
  Similarly for $Y_t$, once given all the randomness till time $t-1$, there is $\mathbb{E}_{t-1} Y_t = 0$. Different with $X_t$, $Y_t$ is unbounded. But luckily, $z_t$ is a multivariate Gaussian random variable with independent components, so it is easy to check: if we set $\sigma_t^2 = 2a(\eta_t L + \beta L \eta_t^2) \alpha^2$, where $a = (1- \mathrm{exp}(-\frac{2}{d}))^{-1} \text{(which is actually $O(d)$) and } \alpha^2 = \frac{4L^2\log(3n/\delta)\log(2/\delta)}{\epsilon^2}$, then we have $\mathbb{E}_{t-1}[\mathrm{exp}(\frac{Y_t^2}{\sigma_t^2})] \leqslant \mathrm{exp}(1)$. Thus using martingale inequality Lemma \ref{martingale}, with probability at most $\frac{\gamma}{3}$, there is
  \begin{align}
    \sum_{t=1}^{n^2} Y_t \geqslant (\eta_1 + \beta \eta_1^2) \frac{nL^2}{\epsilon} \sqrt{24 a\log \frac{3}{\gamma} \log \frac{3n}{\delta} \log \frac{2}{\delta}}    \label{ineq3}
  \end{align} 
  As $R_t$ is a sum of squares of Gaussian random variables, so $\sum_{t=1}^{n^2} R_t$ is actually a scalable $\chi^2$ random variable with $dn^2$ degrees of freedom. According to the tail bound of chi-square distribution \cite{laurent2000adaptive}, with probability at most $\frac{\gamma}{3}$, there is
  \begin{align}
    \sum_{t=1}^{n^2} R_t \geqslant \frac{\beta}{2}\alpha^2\eta_1^2(dn^2 + 2n\sqrt{d\log \frac{3}{\gamma}} + 2\log \frac{3}{\gamma})     \label{ineq4}
  \end{align}  
  Now, combining inequalities (\ref{ineq1}), (\ref{ineq5}), (\ref{ineq2}), (\ref{ineq3}), (\ref{ineq4}), we obtain the theorem.
\end{proof}



\end{document}